\DeclareRobustCommand{\eg}{e.g.,\@\xspace}                 
\DeclareRobustCommand{\ie}{i.e.,\@\xspace}                    
\DeclareRobustCommand{\wrt}{w.r.t.\@\xspace}
\DeclareMathOperator{\E}{\mathbb{E}}
\DeclareRobustCommand{\algnameMain}{Non-Linear Correlated Target-Feature Aggregation\@\xspace}     
\DeclareRobustCommand{\algnameshortMain}{NonLinCTFA\@\xspace} 
\begin{document}
\algnewcommand\algorithmicforeach{\textbf{for each}}
\algdef{S}[FOR]{ForEach}[1]{\algorithmicforeach\ #1\ \algorithmicdo}
\renewcommand{\algorithmicrequire}{\textbf{Input:}}
\renewcommand{\algorithmicensure}{\textbf{Output:}}

\title{Interpetable Target-Feature Aggregation for Multi-Task Learning based on Bias-Variance Analysis}
\titlerunning{Interpetable Target-Feature Aggregation}
%
\author{Paolo Bonetti\inst{1} \and
Alberto Maria Metelli\inst{1} \and
Marcello Restelli\inst{1}}
\authorrunning{P. Bonetti et al.}
%
\institute{Diparimento di Elettronica, Informazione e Bioingegneria, Politecnico di Milano, Milan, Italy \email{\{paolo.bonetti,albertomaria.metelli,marcello.restelli\}@polimi.it}
}
\maketitle              
\begin{abstract}
Multi-task learning (MTL) is a powerful machine learning paradigm designed to leverage shared knowledge across tasks to improve generalization and performance. Previous works have proposed approaches to MTL that can be divided into feature learning, focused on the identification of a common feature representation, and task clustering, where similar tasks are grouped together. In this paper, we propose an MTL approach at the intersection between task clustering and feature transformation based on a two-phase iterative aggregation of targets and features. First, we propose a bias-variance analysis for regression models with additive Gaussian noise, where we provide a general expression of the asymptotic bias and variance of a task, considering a linear regression trained on aggregated input features and an aggregated target. Then, we exploit this analysis to provide a two-phase MTL algorithm (\algnameshortMain). Firstly, this method partitions the tasks into clusters and aggregates each obtained group of targets with their mean. Then, for each aggregated task, it aggregates subsets of features with their mean in a dimensionality reduction fashion. In both phases, a key aspect is to preserve the interpretability of the reduced targets and features through the aggregation with the mean, which is further motivated by applications to Earth science. Finally, we validate the algorithms on synthetic data, showing the effect of different parameters and real-world datasets, exploring the validity of the proposed methodology on classical datasets, recent baselines, and Earth science applications.

\keywords{Multi-Task Learning  \and Variable Aggregation \and Bias-Variance.}
\end{abstract}

\section{Introduction}\label{sec:intro}

\emph{Machine Learning}~\citep[ML]{bishop2006} approaches usually consider an individual learning problem, eventually decomposing complex problems into independent tasks. Inspired by the possibility of exploiting their interconnections,
\emph{Multi-task learning}~\citep[MTL]{caruana1997multitask} methods are designed to simultaneously learn multiple related tasks, leveraging shared knowledge to improve performance and generalization. In recent years, MTL has gained significant attention across various domains, including natural language processing~\citep{chen2019bert}, computer vision~\citep{bachmann2022multimae}, and healthcare~\citep{zhao2023multi}.

Focusing on supervised MTL, \emph{task-clustering} methods are designed to aggregate different tasks into clusters, exploiting their relationships to learn groups of tasks with the same model. On the other hand, \emph{feature-based} MTL approaches are focused on the identification of a subset of relevant common input features (feature selection), or the extraction of a combination of relevant original inputs (feature transformation). In the feature transformation context, a \emph{dimensionality reduction} approach may be considered to extract a set of relevant features common to all the tasks, reducing the dimension of the feature space.

In this paper, we propose an MTL approach (\algnameshortMain) at the intersection between task clustering and feature transformation with dimensionality reduction. Firstly, we introduce a specific task clustering, based on partitioning the targets, aggregating each obtained group of targets with their mean. Then, for each aggregated task (the mean of the targets of a cluster), we aggregate subsets of features with their mean. This way, we first learn a single model for each group of tasks. Then, we aggregate subsets of features with their mean in each aggregated task, producing a set of reduced features in a dimensionality reduction fashion. In both cases, we provide theoretical guarantees on the improvement (or not worsening) of the mean squared error on each of the original tasks. A schematic view of this methodology can be found in Figure \ref{fig:workflowMain}. Additionally, Figure \ref{fig:workflowVar} shows a variant that will be discussed in the next sections.

\begin{figure}[t]
     \centering
     \begin{subfigure}{0.98\textwidth}
         \centering
     \includegraphics[width=\textwidth]{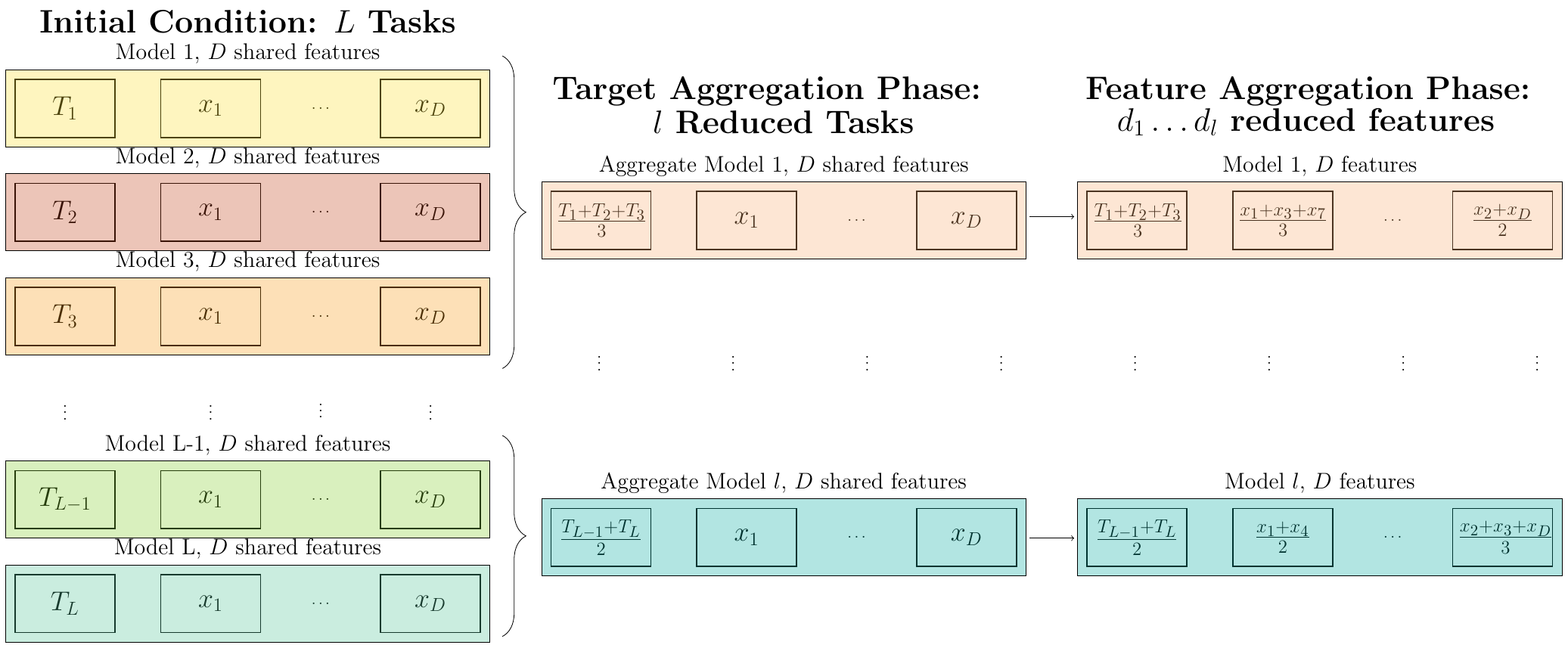}
        \caption{Block scheme of the algorithm: partition of tasks, aggregation of each set of targets with the mean, replication of the procedure to the $D$ features, for each reduced task. 
        }
        \label{fig:workflowMain}
    \end{subfigure}
    \begin{subfigure}{\textwidth}
         \centering
     \includegraphics[width=0.68\textwidth]{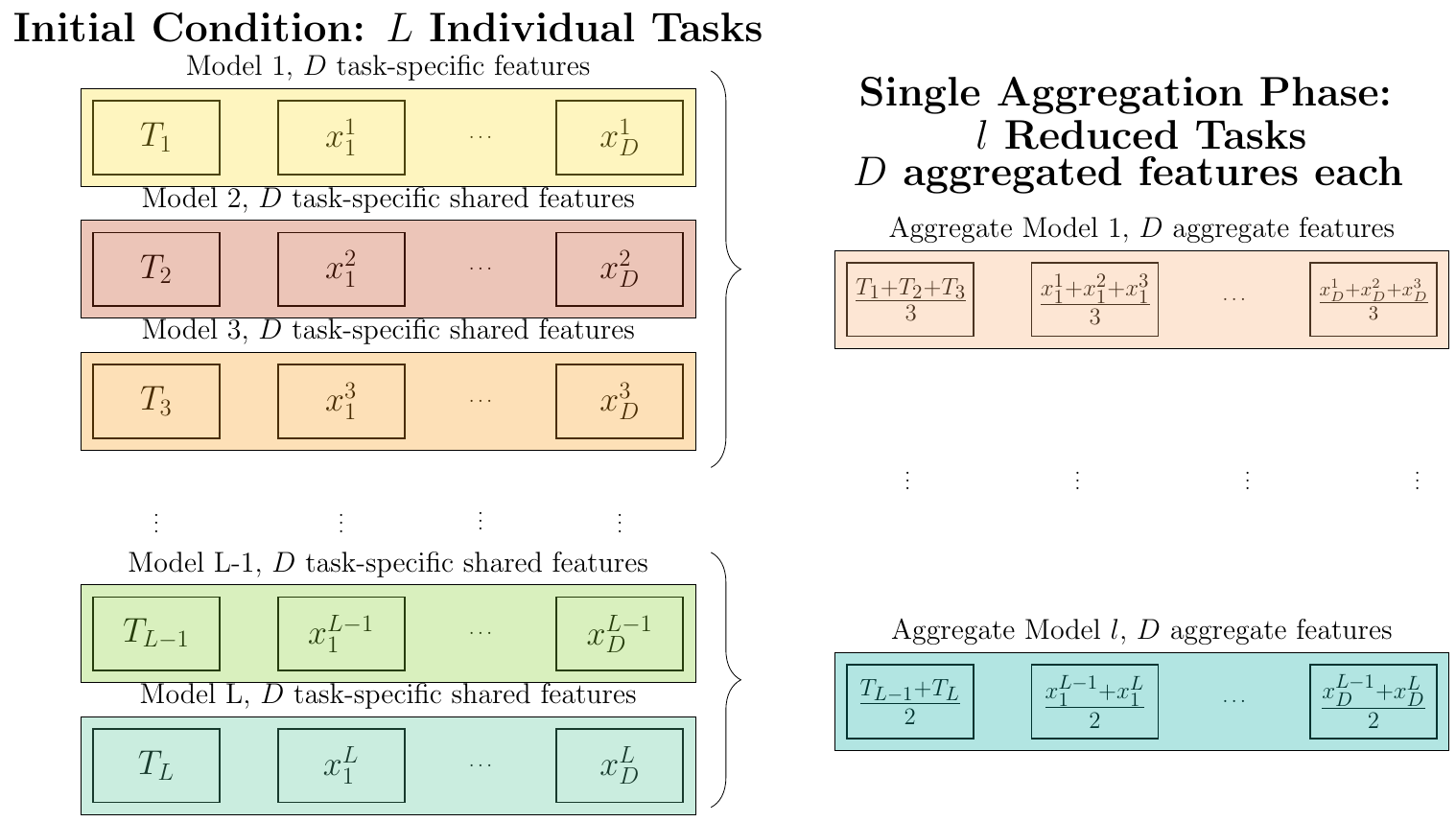}
        \caption{Block scheme of a variant of the algorithm with $D$ features individually measured for each task. In the aggregation phase, both targets and features are averaged.
        }
        \label{fig:workflowVar}
    \end{subfigure}
    \caption{Block scheme of main algorithm and a homogeneous case variant.}
        \label{fig:workflow}
\end{figure}

The choice to consider the mean as an aggregation function is to preserve interpretability, following the definition of interpretability as the property of an ML pipeline to be understood by domain experts, without explanations by ML experts (see~\citep{kovalerchuk2021survey} for a broader discussion). Indeed, we aggregate targets and features, reducing the dimension and the variance of the models and preserving the meaning of each aggregation, which is a mean of variables. On the other hand, the aggregations induce an increase in bias that will be controlled in the analysis.

\begin{figure}
     \centering
     \begin{subfigure}{0.48\textwidth}
         \centering
         \includegraphics[width=\textwidth]{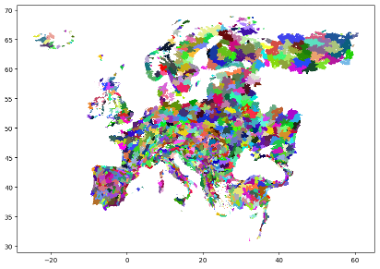}
         \caption{Motivational example.}
         \label{fig:originalBasins}
     \end{subfigure}
     \hfill
     \begin{subfigure}{0.48\textwidth}
         \centering
        \includegraphics[width=\textwidth]{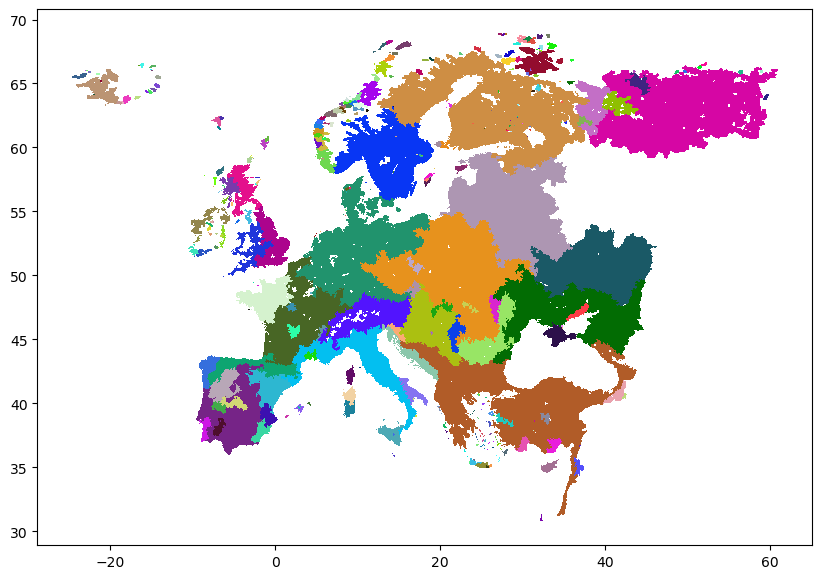}
         \caption{Basins reduced by \algnameshortMain.}
         \label{fig:aggrBasins}
     \end{subfigure}
        \caption{Figure \ref{fig:originalBasins} represents some European sub-regions, identified with different colors, where a target variable and meteorological features are available. Figure \ref{fig:aggrBasins} shows the aggregated regions after the application of \algnameshortMain.}
        \label{fig:introductionExamples}
\end{figure}

A motivational example 
is related to Earth science, where 
we may be interested in the prediction of a target variable at different (correlated) locations, given a set of meteorological features measured in each of them (see Figure \ref{fig:originalBasins}). 
In this context, the proposed algorithm aggregates highly correlated targets and learns an individual model for their mean, resulting in a reduced number of models, simplifying data representation (through further aggregating highly correlated features) and enhancing the performances without loss of interpretability, mitigating overfitting, and limiting the computational complexity (see Figure \ref{fig:aggrBasins}, further discussed in the experimental section). In this example, preserving the interpretability 
is essential for climatologists, filling the gap between advanced ML methods (considered as black-box algorithms) and their physical meaning.


\textbf{Contributions}~~
After introducing notation, problem formulation, and related works (Section \ref{sec:preliminaries}), the contributions of this paper can be summarized as:

$1)$ General theoretical analysis of the asymptotic bias and variance, estimating a target variable with a linear model, trained considering as target an average of targets and as features a reduced set of basis functions (Section \ref{sec:theo}).

$2)$ Introduction of a novel MTL approach, with theoretical guarantees, aggregating targets and features with the mean for interpretability 
    (Section \ref{sec:algo}).
    
$3)$ Extension of recent dimensionality reduction methods~\citep{bonetti2023interpretable,bonetti2023nonlinear}, both to multiple targets and generalizing the theoretical results in the single-task case. 

$4)$ Validation of the proposed algorithm on synthetic data, benchmark MTL datasets and methods, and a multi-task dataset from Earth science (Section \ref{sec:experiments}). 

\noindent Technical proofs and additional results can be found in the Appendix.

\section{Preliminaries}\label{sec:preliminaries}
\textbf{Notation}~~Given $L$ learning tasks $\{\mathcal{T}_i\}_{i=1}^L$, each task $\mathcal{T}_i$ is a supervised learning problem with training set $S_i=\{(\mathbf{x}^i,y_i)_j\}_{j=1}^{n_i}
$, where $n_i$ samples for the $i$-th task are available, $(\mathbf{x}^i)_j=(x_1^i,\dots,x_{D_i}^i)_j\in \mathbb{R}^{D_i}$ is the $j$-th training sample associated to task $i$ composed of $D_i$ features, and $(y_i)_j$ is the corresponding target. The random variable $y_i$ represents the $i$-th target, $x_k^i$ denotes the $k$-th feature of task $\mathcal{T}_i$, and the random vector $\mathbf{x^i}$ is the full set of features of the $i$-th task. The symbols $\sigma^2_{a}$, $cov(a,b)$, $\rho_{ab}$ and $\hat{\sigma}^2_{a}$, $\hat{cov}(a,b)$, $\hat{\rho}_{ab}$ denote the variance of a random variable $a$, its covariance, and correlation with the random variable $b$ and their estimators, respectively. Finally, $\mathbb{E}_a[f(a)]$ is the expected value of a function $f(\cdot)$ of the random variable $a$ \wrt\ its distribution. 

\begin{remark}\label{rem:algorithmsIdea}To simplify the notation, in this paper, 
we mainly consider a shared set of $D$ features $\bm{x}:=\{x_1,\dots,x_D\}:=\{x_1^1,\dots,x_{D_1}^1, \dots, x_1^L,\dots,x_{D_L}^L\}$ and the same number of training samples $n$ for each task. This simplification motivates the workflow of Figure \ref{fig:workflowMain}. We will also discuss a variant with $D$ task-specific features (workflow in Figure \ref{fig:workflowVar}). Recalling the motivational example, this means that we will consider the full set of measurements of meteorological features as a shared set of $D$ features. Then, we will discuss a variant with a set of meteorological features measured in each individual sub-region associated with its target. Additionally, this homogeneous-feature variant could be applied in the heterogeneous settings, aggregating corresponding features (\eg temperature), without changing specific features associated to a sub-region (e.g., some snow-related features may be available only for a mountainous region).  
\end{remark}

\textbf{Data generation process}~~We consider the general relationship between the full set of features and each target with additive Gaussian noise. The noise variables $\{\epsilon_k\}_{k=1}^L$ can be correlated with each other, but independent from features. 

In the theoretical analysis, we consider a partition $\mathcal{P}$ of the tasks. Therefore, each task $\mathcal{T}_k$ belongs to a set of the partition $\mathcal{P}_{\iota(k)}$. 
Without loss of generality, in the analysis we will focus on the $i$-th task $y_i=f_i(x_1\dots x_D) + \epsilon_i,\ \epsilon_i\sim \mathcal{N}(0,\sigma_i^2)$, and we will denote with $\mathcal{P_\iota}:=\mathcal{P}_{\iota(i)}$ the set of $\mathcal{P}$ that contains $\mathcal{T}_i$, and with $K_\iota$ its cardinality. The aggregated (mean) target that contains $y_i$ is therefore $\psi_\iota:=\frac{1}{K_\iota}\sum_{k:y_k \in \mathcal{P}_\iota}y_k$. Finally, to simplify the computations, we assume the expected values of features and targets to be zero: $\mathbb{E}[x_k]=\mathbb{E}[y_i]=\mathbb{E}[f_i(x_1\dots x_D)]=0,\ \forall k\in \{1,\dots,D\},\ \forall i\in \{1,\dots,L\}$.

\textbf{Loss Measure}~~As a natural performance measure for regression and in line with related works~\citep{bonetti2023interpretable,bonetti2023nonlinear}, in the theoretical analysis we evaluate the Mean Squared Error (MSE), focusing on its bias and variance components (bias-variance decomposition~\citep{hastie2009}, see Equation \ref{eq:BiasVarDec} of Appendix \ref{app:variance}).

\textbf{Multi-Task Learning via aggregations}~~
In the theoretical analysis of Section \ref{sec:theo}, we consider a task $y_i=f_i(x_1\dots x_D) + \epsilon_i,\ \epsilon_i\sim \mathcal{N}(0,\sigma_i^2),$ and we evaluate the effect (in terms of MSE) of learning a linear model trained on a target $\psi_\iota$, the mean of a given subset of original targets which contains $y_i$. 
Additionally, 
$d$ given transformations $\{\phi_1,\dots,\phi_d\}$ of the $D$ original features are the inputs. 
They could be any zero-mean transformations (basis functions), but we focus on the means of $d$ subsets of original features. Indeed, through Algorithm \ref{alg:nonLinMTL}, we exploit the theoretical results to identify iteratively, and aggregate with the mean, convenient partitions of features and targets in polynomial time. 

\begin{remark}[Limitations]
    In the analysis, we focus on linear models to derive closed-form results. 
    However, the \algnameshortMain algorithm can be applied 
    with any supervised learning algorithm. In this case, its convenience becomes heuristic, and its effectiveness should be tested empirically. The second main limitation is the asymptoticity of the analysis, 
    to identify expressions of variance and bias 
    without considering confidence intervals of each statistical estimator.
\end{remark}

\subsection{Related Works: Dimensionality Reduction, Multi-Task Learning}

$\quad$\textbf{Dimensionality Reduction}~~
Considering each individual task, the aggregation of its input features can be considered a dimensionality reduction method. These approaches map $D$ features into a reduced dataset of dimension $d<D$, with transformation functions aimed to maximize specific performance measures. Linear, non-linear, supervised, and unsupervised methods exist. For broader discussions and algorithms on dimensionality reduction, we refer to~\citep{vandermaaten2009,AYESHA202044,Jia2022FeatureDR}. The specific idea to aggregate groups of features with their mean resembles group regularization methods, such as Cluster LASSO~\citep{she2008sparse} and OSCAR~\citep{bondell2008simultaneous}, although the method proposed in this paper aggregates features independently from the training of the supervised model. This is in line with some recent approaches~\citep{bonetti2023interpretable,bonetti2023nonlinear}, that we seek to generalize to a multi-task context and by considering the full set of features to identify the aggregations, rather than their bivariate analyses. 

\textbf{Multi-Task Learning}~~The main algorithm of this paper is an MTL approach that identifies subsets of targets, learning a single model for their mean, that we claim to be convenient for the original individual tasks. 
A classical description of MTL is~\citep{caruana1997multitask}, and more recent algorithms are in~\citep{zhang2021survey}. Following their taxonomy, we briefly revise parameter-based and feature-based MTL approaches. A few instance-based approaches also exist, designed to share samples across tasks. 

Parameter-based methods can be based on the assumption that similar tasks have similar model parameters, forcing the coefficient matrix to be low-rank (low-rank approaches, \eg~\citep{Zhang2023LearningLA}); they can estimate and exploit task relationships such as covariances (task-relation approaches, \eg~\citep{Zhou2023AutomaticTR}); they can share sets of parameters (decomposition approaches, \eg~\citep{Li2023MultiTaskLW}); 
or they can assume that different tasks form several clusters (task-clustering approaches, \eg~\citep{Liu2017HierarchicalCM}). Some recent parameter-based MTL algorithms aim to identify groups of related tasks exploiting different metrics, such as the effect of the gradient of the loss associated with one task on the loss of another task~\citep{fifty2021efficiently}, or summary statistics~\citep{,knight2024multi}. The approach of this paper can be considered a task-clustering method since we identify disjoint groups of tasks, averaging the corresponding targets. 

Feature-based methods are based on the assumption that tasks share a common feature representation. Some approaches learn this feature representation by maximizing the information to each task 
(feature transformation). Among them, deep learning-based approaches are the most commonly used (\eg~\citep{ye2022taskprompter}). Other approaches select subsets of original features, maximizing their relevance to the targets (feature selection). They usually optimize a loss function that both penalizes the selection of different features across tasks and minimizes the number of important features (\eg~\citep{Zhong2023HeterogeneousMF}). The proposed algorithm combines the task-clustering phase with a feature transformation phase, where for each cluster of tasks, we identify and aggregate subsets of features. 



\section{Bias-Variance Analysis: Theoretical Results}\label{sec:theo}

In this section, we present the main theoretical results, 
deriving the asymptotic bias and variance of aggregated tasks. We consider a generic task $\mathcal{T}_i$: 
\begin{equation}\label{eq:taski}
    y_i=f_i(x_1\dots x_D) + \epsilon_i,\ \epsilon_i\sim \mathcal{N}(0,\sigma_i^2),
    \end{equation} 
    and the set of targets $\mathcal{P_\iota}$, that contains $y_i$. In this setting, we estimate the target $y_i$ with a linear model, trained with the mean of the elements of $\mathcal{P_\iota}$ as target, and $d$ inputs $\{\phi_1,\dots,\phi_d\}$. 
    Recalling that $\psi_\iota:=\frac{1}{K_\iota}\sum_{k:y_k \in \mathcal{P}_\iota}y_k$, we estimate: 

\begin{equation}\label{eq:estimatedModelGeneral}
\hat{y}_i=\hat{\psi}_\iota=
\hat{w}_1^\iota \phi_1 + \dots + \hat{w}_d^\iota \phi_d,\end{equation} 

with $\hat{w}_j^\iota$ least squares estimates. In particular $\{\phi_1,\dots,\phi_d\}=\{x_1,\dots,x_D\}$ is the case with original features, and $\mathcal{P}_\iota=\{y_i\}$ identifies the single-task case. 


\textbf{Variance}~~We firstly derive the variance of the linear model $\hat{\psi}_\iota$ (Equation \ref{eq:estimatedModelGeneral}), for the $i$-th target $y_i$. Proofs and additional discussions are in Appendix \ref{app:variance}.

\begin{theorem}
\label{thm:variance}
Let the relationship between the features and the target of a task $\mathcal{T}_i$ be defined as Equation \ref{eq:taski}. Let also each estimator converge in probability to the quantity that it estimates. In the asymptotic case, let $var_d^\iota$ be the variance of a linear regression trained with the basis functions $\{\phi_1,\dots,\phi_d\}$ as inputs and the mean of a cluster of targets $\psi_\iota$, defined in Equation \ref{eq:estimatedModelGeneral}, as output. It holds:
\begin{equation}\label{eq:variance}
    var_d^\iota = \frac{\bar{\sigma}_\iota^2}{(n-1)}\cdot d,
\end{equation}
where $\bar{\sigma}_\iota^2:=var(\frac{1}{K_\iota}\sum_{k:y_k \in \mathcal{P}_\iota} \epsilon_k)$ is the variance of the mean of noises of the targets in $\mathcal{P}_\iota$, $n$ is the number of training samples, and $d$ is the number of inputs. 
\end{theorem}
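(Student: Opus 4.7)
The plan is to carry out the classical variance computation for ordinary least squares, but with the target replaced by the aggregated target $\psi_\iota$, and then to identify the effective noise variance. Let $\Phi \in \mathbb{R}^{n\times d}$ be the design matrix whose rows are the basis-function vectors $(\phi_1,\dots,\phi_d)$ evaluated at the training inputs, and let $\Psi_\iota \in \mathbb{R}^n$ collect the corresponding values of $\psi_\iota$. Decompose $\Psi_\iota = F_\iota + E_\iota$, where $F_\iota$ contains the deterministic part $\tfrac{1}{K_\iota}\sum_{k:y_k\in\mathcal{P}_\iota} f_k(x_1,\dots,x_D)$ and $E_\iota$ contains the aggregated noise $\bar{\epsilon}_\iota := \tfrac{1}{K_\iota}\sum_{k:y_k\in\mathcal{P}_\iota}\epsilon_k$. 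Under the data generation process of Section~\ref{sec:preliminaries}, samples are i.i.d., so the entries of $E_\iota$ are independent with common variance $\bar{\sigma}_\iota^2$ (the within-sample correlation of the $\epsilon_k$'s is already absorbed into $\bar{\sigma}_\iota^2$, and cross-sample entries are independent).

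Next, I would write the OLS estimator $\hat{w}^\iota = (\Phi^\top\Phi)^{-1}\Phi^\top \Psi_\iota$ and, conditionally on $\Phi$, compute its variance. Because $\Phi^\top F_\iota$ is deterministic given $\Phi$ and $E_\iota$ is independent of $\Phi$ (noise independent of features), the conditional variance reduces to
\begin{equation*}
\mathrm{Var}[\hat{w}^\iota \mid \Phi] = (\Phi^\top\Phi)^{-1}\Phi^\top \bigl(\bar{\sigma}_\iota^2 I_n\bigr)\Phi(\Phi^\top\Phi)^{-1} = \bar{\sigma}_\iota^2 (\Phi^\top\Phi)^{-1}.
\end{equation*}
The prediction at a fresh test input $\boldsymbol{\phi}$ is $\hat{\psi}_\iota(\boldsymbol{\phi}) = \boldsymbol{\phi}^\top \hat{w}^\iota$, so its conditional variance is $\bar{\sigma}_\iota^2\,\boldsymbol{\phi}^\top (\Phi^\top\Phi)^{-1}\boldsymbol{\phi}$. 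Averaging over the test input distribution and applying the trace trick yields $\bar{\sigma}_\iota^2 \operatorname{tr}\!\bigl[(\Phi^\top\Phi)^{-1}\,\mathbb{E}[\boldsymbol{\phi}\boldsymbol{\phi}^\top]\bigr]$.

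Finally, I invoke the asymptotic convergence assumed in the statement: since each empirical estimator converges in probability to its population counterpart, $\tfrac{1}{n-1}\Phi^\top\Phi \to \mathbb{E}[\boldsymbol{\phi}\boldsymbol{\phi}^\top]$, hence $(\Phi^\top\Phi)^{-1} \to \tfrac{1}{n-1}\bigl(\mathbb{E}[\boldsymbol{\phi}\boldsymbol{\phi}^\top]\bigr)^{-1}$. Substituting into the trace and using the cyclic property gives $\operatorname{tr}(I_d)=d$, so the mean prediction variance equals $\bar{\sigma}_\iota^2 \cdot d/(n-1)$, which is exactly \eqref{eq:variance}. Plugging in the definition of $\bar{\sigma}_\iota^2=\mathrm{var}(\tfrac{1}{K_\iota}\sum_{k:y_k\in\mathcal{P}_\iota}\epsilon_k)$ makes the dependence on the aggregation explicit.

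The main subtlety, and what I expect to be the trickiest bookkeeping, is isolating $\bar{\sigma}_\iota^2$ as the only noise-driven scalar in the formula: one must be careful that the possible misspecification between $F_\iota$ and the linear span of $\{\phi_1,\dots,\phi_d\}$ only contributes to the bias, not to the variance term being computed here, and that the within-sample correlations among the $\epsilon_k$ are precisely captured by $\bar{\sigma}_\iota^2$ rather than by an off-diagonal covariance across samples. Once these two points are handled cleanly, the asymptotic substitution and the trace identity deliver the result in one line.
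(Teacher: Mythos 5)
Your proposal is correct and follows essentially the same route as the paper's proof: compute the conditional covariance of the OLS coefficients, $\bar{\sigma}_\iota^2(\Phi^\top\Phi)^{-1}$, take the expectation of the prediction variance over the test input, and let the empirical second-moment matrix cancel against its population counterpart in the asymptotic regime so that the trace collapses to $d$. The paper writes the same cancellation as the sum of the Hadamard product of the coefficient covariance and the feature covariance (passing through the precision matrix and partial correlations), but the key step and the treatment of the aggregated noise via $\bar{\sigma}_\iota^2$ are identical to yours.
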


\begin{remark}
    The theorem follows the intuition that the asymptotic variance is proportional to the number of inputs $d$ (i.e., the number of estimated coefficients) 
    and the variance gets smaller as the number of samples increases, 
    since the estimate is more accurate. Finally, the asymptotic variance increases with $\bar{\sigma}_\iota^2$, which is the distortion associated with the averaged target $\psi_\iota$. 
\end{remark}

\begin{remark}
    Some specific cases are particularly relevant.
    
    $1)$ When the inputs are the $D$ original features, we get $var_D^\iota = \frac{\bar{\sigma}_\iota^2}{(n-1)}\cdot D$. For a single-task approach, the variance is $var_d^i =  \frac{{\sigma}_i^2}{(n-1)}\cdot d$. Combining them, 
    ($D$ features and a single-task), 
    the variance is $var_D^i = \frac{{\sigma}^2_i}{(n-1)}\cdot D$.
    
    $2)$ When the variance of the noises is constant ($\forall i:\sigma^2_i=\sigma^2$), we get: $\bar{\sigma}_\iota^2 = \frac{\sigma^2}{K_\iota}(1+(K_\iota-1)\bar{\rho}_{\iota})$, with $\bar{\rho}_{\iota}$ average correlation among noises of the targets in 
    $\mathcal{P}_\iota$. 
    Therefore, if $\bar{\rho}_\iota=1$, the task aggregation does not reduce the variance since the variance of the averaged noise is equal to the individual ($\sigma^2$). 
    On the contrary, since $\bar{\rho}_\iota\geq \frac{1}{1-K_\iota}$ by non-negativity of variance, maximum variance gain corresponds to the minimum average correlation among noises $\bar{\rho}_\iota = \frac{1}{1-K_\iota}$. 
    Finally, when the noises are independent ($\bar{\rho}_\iota=0$), the aggregated noise variance is reduced by a factor $K_\iota$ ($\frac{1}{K_\iota}\sigma^2$). Intuitively, when the noises are less correlated, the different tasks are better exploited to refine the knowledge about each task. 
    
    $3)$ In~\citep{bonetti2023interpretable,bonetti2023nonlinear} a similar asymptotic result is provided in the single-task setting, with an asymptotic variance equal to $\frac{2\sigma^2_i}{(n-1)}$ in the bivariate case and $\frac{\sigma^2_i}{(n-1)}$ in the univariate one. Theorem \ref{thm:variance} generalizes that result to $d$ input features, revealing a cost of $\frac{\sigma^2}{n-1}$ for each feature considered. Therefore, by reducing the features from $D$ to $d<D$, the asymptotic variance decrease is $\frac{\sigma^2}{n-1}\cdot(D-d)$.
\end{remark}

\textbf{Bias}~~We now focus on the bias of the linear model $\hat{\psi}_\iota$ (Equation \ref{eq:estimatedModelGeneral}), \wrt the $i$-th target $y_i$. Proofs and more discussions can be found in Appendix \ref{app:bias}.

\begin{theorem}
\label{thm:bias}
Let the relationship between the features and the target of a task $\mathcal{T}_i$ be defined as Equation \ref{eq:taski}. Let also each estimator converge in probability to the quantity that it estimates. In the asymptotic case, let $bias_d^\iota$ be the bias of a linear regression trained with the basis functions $\{\phi_1,\dots,\phi_d\}$ as inputs and the mean of a cluster of targets $\psi_\iota$, defined in Equation \ref{eq:estimatedModelGeneral}, as output. It is equal to:

\begin{equation}\label{eq:bias}
\begin{split}
    bias_d^\iota 
    & = \sigma^2_{f_i} - \sigma^2_{\psi_\iota}R^2_{d,\iota}  +2 (cov(\psi_\iota,f_i - \psi_\iota | \mathbf{\Phi}) - cov(\psi_\iota,f_i - \psi_\iota)),
\end{split}
\end{equation}
with $\sigma^2_{f_i}$ variance of the function of inputs defining the $i$-th task, $\sigma^2_{\psi_\iota}$ variance of the aggregated target $\psi_\iota$, $R^2_{d,\iota}$ squared coefficient of multiple correlation between the $d$ inputs and the aggregated target $\psi_\iota$, and $cov(\psi_\iota,f_i - \psi_\iota | \mathbf{\Phi})$ partial covariance between the two random variables $
\psi_\iota,f_i - \psi_\iota$, given the inputs $\mathbf{\Phi}$. 
\end{theorem}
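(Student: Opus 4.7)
\textbf{Proof plan for Theorem \ref{thm:bias}.}
My plan is to push the least-squares estimator to its population limit and then reduce the resulting squared error into variance and covariance pieces, using the geometry of orthogonal projection in $L^2$. Write $\mathbf{\Phi}=(\phi_1,\dots,\phi_d)$. By the consistency assumption, $\hat w^\iota_j\to w^*_j$ where $w^*=\arg\min_w \mathbb{E}[(\psi_\iota-\sum_j w_j\phi_j)^2]$; hence the asymptotic predictor $\hat\psi^*_\iota:=\sum_j w^*_j\phi_j$ is precisely the orthogonal projection $P_{\mathbf{\Phi}}(\psi_\iota)$ of $\psi_\iota$ onto $\mathrm{span}(\mathbf{\Phi})$. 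Since the bias-variance decomposition (Equation~\ref{eq:BiasVarDec}) identifies the squared-bias term at the population level as
\[
bias_d^\iota=\mathbb{E}\bigl[(\hat\psi^*_\iota-f_i)^2\bigr],
\]
and all variables are centred, I would start from
\[
bias_d^\iota=\mathrm{var}(\hat\psi^*_\iota)-2\,\mathrm{cov}(\hat\psi^*_\iota,f_i)+\sigma^2_{f_i}.
\]

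Next I would evaluate each term. For the variance, the definition of the multiple correlation coefficient gives $\mathrm{var}(\hat\psi^*_\iota)=\mathrm{var}(P_{\mathbf{\Phi}}(\psi_\iota))=\sigma^2_{\psi_\iota}R^2_{d,\iota}$. For the covariance with $f_i$, I would invoke the orthogonal decomposition $\psi_\iota=\hat\psi^*_\iota+(\psi_\iota-\hat\psi^*_\iota)$, which yields
\[
\mathrm{cov}(\psi_\iota,f_i)=\mathrm{cov}(\hat\psi^*_\iota,f_i)+\mathrm{cov}(\psi_\iota-\hat\psi^*_\iota,f_i).
\]
Because $\psi_\iota-\hat\psi^*_\iota$ is orthogonal to $\mathrm{span}(\mathbf{\Phi})$, the second summand equals $\mathrm{cov}(\psi_\iota-\hat\psi^*_\iota,f_i-P_{\mathbf{\Phi}}(f_i))$, which is exactly the (linear) partial covariance $\mathrm{cov}(\psi_\iota,f_i\mid\mathbf{\Phi})$. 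Thus
\[
\mathrm{cov}(\hat\psi^*_\iota,f_i)=\mathrm{cov}(\psi_\iota,f_i)-\mathrm{cov}(\psi_\iota,f_i\mid\mathbf{\Phi}).
\]
Substituting these two identities gives a clean intermediate form
\[
bias_d^\iota=\sigma^2_{f_i}+\sigma^2_{\psi_\iota}R^2_{d,\iota}-2\,\mathrm{cov}(\psi_\iota,f_i)+2\,\mathrm{cov}(\psi_\iota,f_i\mid\mathbf{\Phi}).
\]

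Finally, I would recast this to surface the residual $f_i-\psi_\iota$ that appears in the statement. Using bilinearity,
\[
\mathrm{cov}(\psi_\iota,f_i-\psi_\iota)=\mathrm{cov}(\psi_\iota,f_i)-\sigma^2_{\psi_\iota},
\]
\[
\mathrm{cov}(\psi_\iota,f_i-\psi_\iota\mid\mathbf{\Phi})=\mathrm{cov}(\psi_\iota,f_i\mid\mathbf{\Phi})-\sigma^2_{\psi_\iota}(1-R^2_{d,\iota}),
\]
where the second line uses $\mathrm{var}(\psi_\iota\mid\mathbf{\Phi})=\sigma^2_{\psi_\iota}(1-R^2_{d,\iota})$. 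Subtracting these two expressions produces exactly $2(\mathrm{cov}(\psi_\iota,f_i\mid\mathbf{\Phi})-\mathrm{cov}(\psi_\iota,f_i))+2\sigma^2_{\psi_\iota}R^2_{d,\iota}$, and plugging back into the intermediate form collapses the $\sigma^2_{\psi_\iota}R^2_{d,\iota}$ coefficient from $+1$ to $-1$, giving the claimed Equation~\ref{eq:bias}.

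The routine algebra is the final rewriting into the $(f_i-\psi_\iota)$ form; the one conceptually delicate step is the identification $\mathrm{cov}(\psi_\iota-\hat\psi^*_\iota,f_i)=\mathrm{cov}(\psi_\iota,f_i\mid\mathbf{\Phi})$, which hinges on the linear definition of partial covariance and the orthogonality of the least-squares residual to $\mathrm{span}(\mathbf{\Phi})$. I expect this projection-based identification to be the main obstacle, since it is where the entire reduction from asymptotic least squares to population quantities is carried out; everything else is bookkeeping via bilinearity of $\mathrm{cov}$ and the definition of $R^2_{d,\iota}$.
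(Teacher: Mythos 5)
Your proposal is correct and follows essentially the same route as the paper's proof: both expand the asymptotic bias as $\mathrm{var}(\text{predictor}) + \sigma^2_{f_i} - 2\,\mathrm{cov}(\text{predictor},f_i)$, identify the first term with $\sigma^2_{\psi_\iota}R^2_{d,\iota}$, and recognize the cross term as $\mathrm{cov}(f_i,\psi_\iota)-\mathrm{cov}(f_i,\psi_\iota\mid\mathbf{\Phi})$ via the definition of partial covariance. The only difference is presentational — you carry out these identifications through $L^2$ projection geometry, while the paper does the same via explicit precision-matrix computations — and you actually supply the final algebraic rewriting into the $(f_i-\psi_\iota)$ form, which the paper states without detail.
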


\begin{corollary}\label{cor:biasSingle}
    Considering the single-task setting, \ie assuming $\psi_\iota=y_i$, the asymptotic bias of Theorem \ref{thm:bias} reduces to:
    \begin{equation}\label{biasSingle}
        bias_{d}^i = \sigma^2_{f_i} (1-R^2_{d,i}).
    \end{equation}
\end{corollary}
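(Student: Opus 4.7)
The plan is to specialize the general bias formula of Theorem \ref{thm:bias} to the case $\mathcal{P}_\iota = \{y_i\}$, for which $K_\iota = 1$ and $\psi_\iota = y_i$. The goal is to show that, under this specialization, the covariance correction term vanishes and the remaining expression collapses to $\sigma^2_{f_i}(1 - R^2_{d,i})$.

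First, I would simplify the difference $f_i - \psi_\iota$: with $\psi_\iota = y_i = f_i + \epsilon_i$, one has $f_i - \psi_\iota = -\epsilon_i$, so both covariance quantities in Equation \ref{eq:bias} involve only $y_i$ and the noise. Using the independence of $\epsilon_i$ from the features (and hence from the basis $\bm{\Phi}$, which consists of functions of features), a direct calculation gives $cov(y_i, -\epsilon_i) = cov(f_i + \epsilon_i, -\epsilon_i) = -\sigma_i^2$. For the partial covariance, I would note that the asymptotic population residual of $-\epsilon_i$ after linear regression on $\bm{\Phi}$ is $-\epsilon_i$ itself, because the projection of an independent-noise term onto feature-measurable functions vanishes. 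A matching computation on $y_i$ then yields $cov(y_i, -\epsilon_i \mid \bm{\Phi}) = -\sigma_i^2$ as well, so the correction $2\bigl(cov(\psi_\iota, f_i - \psi_\iota \mid \bm{\Phi}) - cov(\psi_\iota, f_i - \psi_\iota)\bigr)$ is exactly zero.

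Next I would rewrite the term $\sigma^2_{\psi_\iota} R^2_{d,\iota}$ in a form involving only $f_i$. Since $P_{\bm{\Phi}} y_i = P_{\bm{\Phi}} f_i + P_{\bm{\Phi}} \epsilon_i = P_{\bm{\Phi}} f_i$ in the asymptotic population limit, the variance explained by $\bm{\Phi}$ coincides for $y_i$ and $f_i$, i.e., $\sigma^2_{y_i} R^2(y_i, \bm{\Phi}) = var(P_{\bm{\Phi}} y_i) = var(P_{\bm{\Phi}} f_i) = \sigma^2_{f_i} R^2(f_i, \bm{\Phi})$. Identifying this common value with $\sigma^2_{f_i} R^2_{d,i}$ and combining with the vanishing covariance correction, Equation \ref{eq:bias} reduces to $bias_d^i = \sigma^2_{f_i} - \sigma^2_{f_i} R^2_{d,i} = \sigma^2_{f_i}(1 - R^2_{d,i})$, which is precisely Equation \ref{biasSingle}.

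The main (and only) subtlety is the identification of the $R^2$ appearing in the corollary: Theorem \ref{thm:bias} defines $R^2_{d,\iota}$ between $\bm{\Phi}$ and the noisy target $\psi_\iota$, while the corollary reads $R^2_{d,i}$ most naturally as the $R^2$ between $\bm{\Phi}$ and the noiseless function $f_i$. The algebraic identity $\sigma^2_{y_i} R^2(y_i, \bm{\Phi}) = \sigma^2_{f_i} R^2(f_i, \bm{\Phi})$, which ultimately reflects the independence of noise from features, is exactly what reconciles the two notations and yields the clean single-task formula.
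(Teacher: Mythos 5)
Your specialization of Theorem \ref{thm:bias} is correct: with $\psi_\iota=y_i$ you get $f_i-\psi_\iota=-\epsilon_i$, the independence of the noise from the features makes $cov(\mathbf{\Phi},\epsilon_i)=0$, so the partial covariance coincides with the marginal one and the correction term vanishes; and the identity $\sigma^2_{y_i}R^2(y_i,\mathbf{\Phi})=var(P_{\mathbf{\Phi}}y_i)=var(P_{\mathbf{\Phi}}f_i)=\sigma^2_{f_i}R^2(f_i,\mathbf{\Phi})$ is exactly the right way to reconcile $\sigma^2_{\psi_\iota}R^2_{d,\iota}$ with $\sigma^2_{f_i}R^2_{d,i}$. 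Your route differs from the paper's in direction rather than in substance: the appendix proves the single-task bias \emph{directly} from the definition, writing $bias_d^i=var_x\bigl(\sum_k\E_{\mathcal{T}}[\hat{w}_k^i]\phi_k\bigr)+\sigma^2_{f_i}-2\,cov\bigl(\sum_k\E_{\mathcal{T}}[\hat{w}_k^i]\phi_k,f_i\bigr)$ and then showing via a dedicated lemma that both the variance of the fitted mean model and its covariance with $f_i$ equal $\sigma^2_{f_i}R^2_{d,i}$ (working with $f_i$ throughout, so the $y_i$-versus-$f_i$ ambiguity never arises), and only afterwards generalizes to aggregated targets. You instead treat the statement as a genuine corollary, specializing the already-proved general formula top-down. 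Your approach is more economical given how the corollary is phrased, but it forces you to confront the notational subtlety about which variable $R^2_{d,\iota}$ is computed against --- a subtlety you correctly identify and resolve, and which the paper's bottom-up derivation sidesteps by construction. Both arguments ultimately rest on the same two facts (noise uncorrelated with the basis functions, and explained variance of $y_i$ equal to that of $f_i$), so the proofs are equivalent in content.
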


\begin{remark} Intuitively, Corollary \ref{cor:biasSingle} shows that, in single-task problems, the bias is proportional to the information, measured through the coefficient of multiple correlation, that the inputs share with the target. Additionally, the bias is proportional to the variance of the function $f_i$ that regulates the $i$-th data generation process. More generally, Theorem \ref{thm:bias} shows that estimating the $i$-th target with a linear model trained with the aggregated target $\psi_\iota$, the bias is still proportional to the variance of the original $i$-th target, which is an irreducible cost. 
Then, the bias decreases proportionally to the coefficient of multiple correlation between aggregated target and features, representing the skill of the linear model to predict the aggregated target, weighted by the variance of the aggregated target itself, which accounts for the simplification introduced by aggregating. 
    Finally, the third term of Equation \ref{eq:bias} is the difference of the covariance, with and without conditioning on the inputs, between the aggregated target and the gap between the $i$-th task and the aggregated target itself. This way, if the features reduce the information shared between $\psi_\iota$ and its gap with $f_i$, the bias reduces since we are exploiting the features to improve the learning of the $i$-th task. 
\end{remark}
\begin{remark}
    Some specific cases can also be derived from the results on the bias.
    
    $1)$ Considering the $D$ original features, the results of Theorem \ref{thm:bias} and Corollary \ref{cor:biasSingle} hold, evaluating the quantities of the expressions with them (\eg $R^2_{D,\iota}$). 
    
$2)$ In~\citep{bonetti2023interpretable,bonetti2023nonlinear} a similar (bivariate) asymptotic analysis is provided in the single-task setting. Corollary \ref{cor:biasSingle} extends those findings to a general case with $d$ inputs. Therefore, if we perform a single-task dimensionality reduction, aggregating $d$ sets of features with their mean, the asymptotic bias variation is 
$\sigma^2_{f_i} (R^2_{D,i}-R^2_{d,i})$. 
\end{remark}

\textbf{Theoretical bounds for aggregations}~~We conclude this Section by showing a condition for a convenient aggregation of two tasks and another for features in single-task problems. Both results can be deduced from Theorem \ref{thm:variance} and \ref{thm:bias}, and they will be exploited in the two phases of the main algorithm, respectively.

\begin{corollary}
    Considering two tasks $\mathcal{T}_i,\mathcal{T}_j$ regulated by Equation \ref{eq:taski}, 
    training an individual linear model with the mean of the two targets as output 
    is profitable 
    \wrt the individual single-task models, in terms of MSE, 
    if and only if:
    \begin{equation}\label{eq:profitable2task}
        \begin{cases}
            \frac{{\sigma}^2_i}{(n-1)}\cdot D + \sigma^2_{\psi_\iota}R^2_{D,\iota}\geq \frac{{\sigma}^2_\iota}{(n-1)}\cdot D + \frac{1}{2}[\sigma^2_{f_i}R^2_{D,i}+\sigma^2_{f_j}R^2_{D,j}]\\
            \frac{{\sigma}^2_j}{(n-1)}\cdot D + \sigma^2_{\psi_\iota}R^2_{D,\iota}\geq \frac{{\sigma}^2_\iota}{(n-1)}\cdot D + \frac{1}{2}[\sigma^2_{f_i}R^2_{D,i}+\sigma^2_{f_j}R^2_{D,j}].
        \end{cases}
    \end{equation}
\begin{proof}
    We compute variance and bias from Theorem \ref{thm:variance} and \ref{thm:bias}, substituting $\psi_\iota=y_i$, $\psi_\iota=y_j$, and $\psi_\iota=\frac{y_i+y_j}{2}$. 
    Then, we impose that the sum of variance and bias of the aggregated case is not worse than the individual one for both models. 
\end{proof}
\end{corollary}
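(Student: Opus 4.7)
The plan is to apply Theorem \ref{thm:variance} and Theorem \ref{thm:bias} to compute, in closed form, the asymptotic MSE of each candidate strategy on each of the two original targets, and then impose that the aggregated strategy is not worse than the single-task one on both targets. The two strategies are (a) fitting two independent single-task linear regressions on $y_i$ and on $y_j$ using all $D$ original features, and (b) fitting a single linear regression on the aggregated target $\psi_\iota = (y_i + y_j)/2$ and using its prediction as an estimator of both $y_i$ and $y_j$.

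First I would instantiate the single-task side by specializing Theorem \ref{thm:variance} with $\psi_\iota = y_i$, so that $K_\iota = 1$ and $\bar{\sigma}^2_\iota = \sigma_i^2$, and by invoking Corollary \ref{cor:biasSingle}. This yields the single-task asymptotic MSE $\frac{\sigma_i^2 D}{n-1} + \sigma^2_{f_i}(1 - R^2_{D,i})$ for $\mathcal{T}_i$, and analogously for $\mathcal{T}_j$. On the aggregated side, Theorem \ref{thm:variance} with $K_\iota = 2$ gives variance $\frac{\bar{\sigma}^2_\iota D}{n-1}$, while Theorem \ref{thm:bias} provides the bias of $\hat{\psi}_\iota$ as an estimator of each of $y_i$ and $y_j$ separately, with $\psi_\iota = (y_i + y_j)/2$.

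The key manipulation is simplifying the covariance correction $2(\mathrm{cov}(\psi_\iota, f_i - \psi_\iota \mid \mathbf{\Phi}) - \mathrm{cov}(\psi_\iota, f_i - \psi_\iota))$ that appears in Theorem \ref{thm:bias}. Using the decomposition $f_i - \psi_\iota = \tfrac{1}{2}(f_i - f_j) - \tfrac{1}{2}(\epsilon_i + \epsilon_j)$ together with the standing assumption that the noises are independent of the features, the noise-driven part of the covariance coincides in the conditional and unconditional versions and therefore cancels in the difference; what remains depends only on $\sigma^2_{f_i}$, $\sigma^2_{f_j}$, and the $R^2_{D,\cdot}$ coefficients. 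The analogous simplification holds for $\mathcal{T}_j$ by swapping the roles of $i$ and $j$. I expect this conditional-versus-unconditional bookkeeping to be the main obstacle, since one must carefully track which objects become deterministic once $\mathbf{\Phi}$ is fixed and which random quantities remain (namely the noises $\epsilon_i, \epsilon_j$).

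Finally, I would substitute the simplified biases and the variances into the two inequalities $\mathrm{MSE}^{\mathrm{agg}}(y_i) \leq \mathrm{MSE}^{\mathrm{single}}(y_i)$ and $\mathrm{MSE}^{\mathrm{agg}}(y_j) \leq \mathrm{MSE}^{\mathrm{single}}(y_j)$, cancel common $\sigma^2_{f_i}, \sigma^2_{f_j}$ contributions, and collect the noise-variance terms on one side with the $R^2$-weighted predictability terms on the other, arriving at the claimed pair of inequalities. The \emph{if and only if} character of the corollary then follows because each step is an equivalent algebraic rearrangement of the underlying asymptotic equalities.
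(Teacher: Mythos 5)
Your proposal is correct and follows essentially the same route as the paper's (very terse) proof: instantiate Theorems \ref{thm:variance} and \ref{thm:bias} with $\psi_\iota=y_i$, $\psi_\iota=y_j$, and $\psi_\iota=\tfrac{y_i+y_j}{2}$, then impose that the aggregated bias-plus-variance is no worse than the single-task one for both targets. Your handling of the covariance correction term --- noting that the noise contribution appears identically in the conditional and unconditional covariances and cancels, leaving $\tfrac{1}{2}\bigl[\sigma^2_{f_j}R^2_{D,j}-\sigma^2_{f_i}R^2_{D,i}\bigr]$ --- is exactly the bookkeeping the paper's proof leaves implicit, and it checks out.
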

Following the intuition, Equation \ref{eq:profitable2task} shows the convenience of aggregating two targets if the variance of the noise is reduced or when the predictive capability of the aggregated model 
is better than the average of the single-task ones.

\begin{corollary}
    Considering a task $\mathcal{T}_i$, the aggregation of $D$ features into $d<D$ aggregated ones is profitable, in terms of MSE of a linear model, if and only if: 
    \begin{equation}\label{eq:NonLinCFAreformulation}
        \frac{\sigma^2_i}{(n-1)}\cdot (D-d) \geq \sigma^2_{f_i} (R^2_{D,i}-R^2_{d,i}).
    \end{equation}
\begin{proof}
    The result follows comparing the single-task bias and variance that are particular cases of the general results of Theorem \ref{thm:variance} and \ref{thm:bias}.
\end{proof}
\end{corollary}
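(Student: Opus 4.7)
The plan is to instantiate the two general theorems in the single-task setting (where $\mathcal{P}_\iota = \{y_i\}$ so that $\psi_\iota = y_i$) and then compare the total MSE when the $d$ aggregated basis functions $\{\phi_1,\dots,\phi_d\}$ play the role of inputs against the baseline where the $D$ original features are used directly. Since the paper defines MSE via the additive bias-variance decomposition, it suffices to sum the two quantities produced by Theorems \ref{thm:variance} and \ref{thm:bias} for each choice of inputs and impose that the $d$-input version is no larger than the $D$-input version.

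First, I would specialize Theorem \ref{thm:variance} to the single task: $\bar{\sigma}_\iota^2 = \sigma_i^2$, so the variance contribution is $\frac{\sigma_i^2}{n-1}\cdot d$ with the $d$ aggregated inputs and $\frac{\sigma_i^2}{n-1}\cdot D$ with the $D$ original features. Next I would invoke Corollary \ref{cor:biasSingle}, which already collapses the partial-covariance correction term in Theorem \ref{thm:bias} to zero in the single-task case, yielding bias $\sigma^2_{f_i}(1-R^2_{d,i})$ for the aggregated inputs and $\sigma^2_{f_i}(1-R^2_{D,i})$ for the original ones. Adding bias and variance in each case and subtracting, the common $\sigma^2_{f_i}$ terms cancel and the condition $\mathrm{MSE}_d \le \mathrm{MSE}_D$ becomes
\begin{equation*}
\frac{\sigma_i^2}{n-1}\cdot d + \sigma^2_{f_i}(1-R^2_{d,i}) \;\le\; \frac{\sigma_i^2}{n-1}\cdot D + \sigma^2_{f_i}(1-R^2_{D,i}),
\end{equation*}
which rearranges immediately into the stated inequality $\frac{\sigma_i^2}{n-1}(D-d) \ge \sigma^2_{f_i}(R^2_{D,i}-R^2_{d,i})$. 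Since each step of the rearrangement is an equivalence, the ``if and only if'' statement is obtained with no additional work.

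There is essentially no serious obstacle here: the result is a direct algebraic consequence of the two theorems once the single-task specialization is performed. The only point that requires a brief justification, and which I would include for clarity, is the sign of the right-hand side: because enlarging the feature set cannot decrease the squared coefficient of multiple correlation, $R^2_{D,i}\ge R^2_{d,i}$, so the condition is a genuine trade-off between the variance saved by dropping $D-d$ inputs and the bias incurred by weakening the linear explanatory power of the feature set. It is also worth remarking, for the reader, that the inequality is sharp (equality meaning aggregation is neutral), consistently with the ``profitable'' condition being stated as a non-strict inequality.
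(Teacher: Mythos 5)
Your proposal is correct and follows essentially the same route as the paper: the paper's proof likewise specializes Theorem \ref{thm:variance} and Corollary \ref{cor:biasSingle} to the single-task case and compares the resulting bias-plus-variance sums for $d$ versus $D$ inputs, which is exactly your computation (the appendix even records the same differences $\Delta var = \frac{\sigma^2_i}{n-1}(D-d)$ and $\Delta bias = \sigma^2_{f_i}(R^2_{D,i}-R^2_{d,i})$). Your added remark that $R^2_{D,i}\ge R^2_{d,i}$, since the aggregated means span a subspace of the original features, is a correct and useful clarification but not needed for the equivalence.
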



Following the intuition, Equation \ref{eq:NonLinCFAreformulation} shows the convenience in terms of variance to reduce the number of features or in terms of bias when the predictive capability of the linear regression 
is better when the features are aggregated.

\section{Multi-Task Learning via Aggregations: Algorithms}\label{sec:algo}

In this section, we present the \algnameshortMain algorithm, assuming $L$ tasks and $D$ shared features. The algorithm is depicted in Figure \ref{fig:workflowMain}, while a multi-input variant, discussed in Remark \ref{rem:algorithmsIdea}, is depicted in Figure \ref{fig:workflowVar}. 
Algorithm \ref{alg:nonLinMTL} reports the pseudo-code of \algnameshortMain, exploiting the loop of Algorithm \ref{alg:AggregationLoop} (in Appendix \ref{app:algo}) to iteratively aggregate targets firstly, and features subsequently.

\textbf{Phase I: task-aggregation.}~~
Firstly, the algorithm iteratively adds targets to a set, forming a partition until no aggregation is convenient. At any iteration, Equation \ref{eq:profitable2task} is exploited to test the convenience of adding a target variable $y_j$ into a set of the partition, initialized as a singleton of a random target. Additionally, a hyperparameter $\epsilon_1$ regulates the propensity of the algorithm to aggregate.

Equation \ref{eq:profitable2task} regulates the convenience of an aggregation of two targets. Therefore, the algorithm starts with $L$ individual tasks, and it identifies two tasks that benefit from the aggregation (\eg $\mathcal{T}_1,\mathcal{T}_2$), moving to $L-1$ linear models. Then, the algorithm further aggregates a third task (\eg $\mathcal{T}_3$) with them, moving to $L-2$ linear models, if the new aggregation ($\frac{y_1+y_2+y_3}{3}$ in the example) is convenient \wrt the aggregate two-task model ($\frac{y_1+y_2}{2}$) and the univariate one ($y_3$). The new aggregation is therefore convenient \wrt the original individual tasks, and it is further convenient \wrt the previous two-task aggregation.

 Similar to forward feature selection, we add a target in the current set, if convenient, without inspecting all the possible combinations, which would be combinatorial. This way, we do not identify the \emph{optimal} partition of targets, but a convenient one \wrt the single-tasks, with a quadratic number of comparisons in the number of tasks ($\mathcal{O}(L^2)$) in the worst case, \ie with no aggregations. 
 
 Additionally, the aggregation depends on the ordering of targets. For this reason, we randomize it to avoid systemic biases. A possible variation could be to introduce a heuristic (\eg the correlation between couples of targets) to rank them and test for the aggregations based on this ranking.  

 \textbf{Phase II: cluster-level feature-aggregation.}~~In the second phase, \algnameshortMain identifies groups of features to aggregate with their mean for each of the $l$ reduced tasks. 
 This way, for each reduced task $\iota$, we identify $d_\iota$ reduced features, 
 reducing the dimension and improving the performance.

 Specifically, we exploit Equation \ref{eq:NonLinCFAreformulation} to iteratively identify couples of features that are convenient to average, following the same iterative procedure of the target aggregation phase. 
 This is quadratic \wrt the number of comparisons for each aggregated task ($\mathcal{O}(l\cdot D^2)$). Additionally, since 
 the terms $\sigma^2_i,\sigma^2_{f_i},n$ are constant across different comparisons, we include them in the hyperparameter $\epsilon_2$, which regulates the propensity of the algorithm to aggregate features.

\begin{algorithm}[ht]
\caption{\algnameshortMain:\algnameMain}\label{alg:nonLinMTL}
\begin{algorithmic}
\Require{features $\bm{x}=\{x_1\dots x_D\}$; targets $\bm{y}=\{y_1\dots y_L\}$; $n$ samples, tolerances $\epsilon_1,\epsilon_2$}
\Ensure{reduced tasks $\{\psi_1,\dots,\psi_l\}$, reduced features $\{\phi_1,\dots,\phi_{d_\iota}\}_{\iota=1}^l$}\\ 
\vspace{-0,3cm}
\Function{\textsc{Compute\_threshold\_features}$( \bm{x}_{curr},y,z_\mathcal{P},z_j,\epsilon)$}{}\Comment{From Equation \ref{eq:NonLinCFAreformulation}}
            \State  $R_{sep} \leftarrow \text{R2score}(\bm{x}_{curr}, y)$
            \State $R_{aggr} \leftarrow \text{R2score}((\bm{x}_{curr}\setminus \{z_\mathcal{P}, z_j\}) \cup \{mean(z_\mathcal{P},z_j)\}, y)$\\
        \Return{$R_{sep}-R_{aggr} \leq \epsilon$}
        \EndFunction
        \Statex
\vspace{-0,3cm}
\Function{\textsc{Compute\_threshold\_targets}$( \bm{x},y_\mathcal{P},y_j,\epsilon )$}{}\Comment{From Equation \ref{eq:profitable2task}}
            \State $y_{ag} \leftarrow mean(y_\mathcal{P},y_j)$
            \State  $R_{\mathcal{P}}, \sigma^2_{\mathcal{P}}, \sigma^2_{f_\mathcal{P}} \leftarrow \text{R2score}(\bm{x},y_\mathcal{P}), \text{var\_res}(\bm{x},y_\mathcal{P}),
            \text{var}(y_\mathcal{P})-\text{var\_res}(\bm{x},y_\mathcal{P})$
            \State $R_j, \sigma^2_{j}, \sigma^2_{f_j} \leftarrow \text{R2score}(\bm{x},y_j), \text{var\_res}(\bm{x},y_j),
            \text{var}(y_j)-\text{var\_res}(\bm{x},y_j)$
            \State  $R_{ag}, \sigma^2_{ag}, \sigma^2_{f_{ag}} \leftarrow \text{R2score}(\bm{x},y_{ag}), \text{var\_res}(\bm{x},y_{ag}), \text{var}(y_{ag})-\text{var\_res}(\bm{x},y_{ag})$\\
            \vspace{-0,3cm}
        \State $threshold_1 =\frac{D}{(n-1)}(\sigma^2_{ag}-\sigma^2_{\mathcal{P}}) + \frac{1}{2}(R_{\mathcal{P}} \sigma^2_{f_{\mathcal{P}}} + R_j \sigma^2_{f_j}) - R_{ag} \sigma^2_{f_{ag}}$\\
        \vspace{-0,3cm}
        \State $threshold_2 =\frac{D}{(n-1)}(\sigma^2_{ag}-\sigma^2_{j}) + \frac{1}{2}(R_{\mathcal{P}} \sigma^2_{f_{\mathcal{P}}} + R_j \sigma^2_{f_j}) - R_{ag} \sigma^2_{f_{ag}}$\\
        \Return{$(threshold_1 \leq \epsilon)\ \mathbf{AND}\ (threshold_2 \leq \epsilon)$}
        \EndFunction
        \Statex
        
\Function{\textsc{\algnameshortMain}$( Input)$}{}\Comment{Main function}
\State{PHASE I: task aggregations}
\State $\{\psi_1,\dots,\psi_l\} \leftarrow $ \textsc{Aggregation}$(\bm{z}=\bm{y},phase=1, \epsilon=\epsilon_1,\bm{x}=\bm{x},y=None)$
\State{PHASE II: feature aggregation for each task}
\ForEach {$\iota \in \{ 1,\dots,l\}$}
\State $\{\phi_1,\dots,\phi_{d_\iota}\} \leftarrow $ \textsc{Aggregation}$(\bm{z}=\bm{x},phase=2, \epsilon=\epsilon_2,\bm{x}=None,y=\psi_\iota)$
\EndFor\\
\Return $\{\psi_1,\dots,\psi_l\}, \{\phi_1,\dots,\phi_{d_\iota}\}_{\iota=1}^l$
\EndFunction
\end{algorithmic}
\end{algorithm}

\begin{remark}
    Algorithm \ref{alg:nonLinMTL} outputs a set of reduced tasks and the associated sets of reduced features. We do not include a final regression 
    to decouple the algorithm from the regression model, which can be run independently, with theoretical guarantees in linear regression. 
    In this sense, we propose a \emph{filter} method. 
\end{remark}

\begin{remark}\label{rem:variants}
    Recalling Remark \ref{rem:algorithmsIdea}, we may want to consider a multi-input setting, with $D$ features for each task individually 
    $\{x_1^i,\dots,x_D^i\}_{i=1}^L$. In our example, they can be $D$ meteorological features measured at each location and associated with its specific target. A variant of Algorithm \ref{alg:nonLinMTL} (Figure \ref{fig:workflowVar}) 
    compares, at each iteration, the model trained with the features associated with each individual target and the model for the aggregated task, trained on averaged features. 
    This way, in a single phase, we aggregate couples of targets and pairwise couples of features. In our example, we would compare individual measurements of temperature and precipitation for single-task problems and the means of temperatures and precipitations as the two inputs of the averaged model. This variant can be extended to a heterogeneous case where, 
    for example, the first task has a snow-related feature not available for the other. In this case, we consider it when the first task appears, 
    keeping unchanged the aggregation of targets and other features.
\end{remark}

\section{Experimental Validation}\label{sec:experiments}
In this section, we show synthetic experiments, validating the proposed algorithm \wrt single-task regressions, showing its behavior varying parameters, and with an ablation study of its two phases. Then, applications to real-world data show the competitiveness of the method \wrt single task and benchmark MTL approaches. Code can be found at: \url{https://github.com/PaoloBonettiPolimi/NonLinCTFA}.

\subsection{Synthetic Experiments and Ablation Study}\label{sub:synthExp}
We start with synthetic experiments, validating the \algnameshortMain against single tasks. 
Specifically, we consider $L=10$ tasks, $D=100$ features shared across all tasks, $n=250$ training samples (same number for testing), the standard deviation of (independent) noises $\sigma=10$, and hyperparameters $\epsilon_1=0, \epsilon_2=0.0001$. Each target is obtained as a linear combination of all the features, with additive Gaussian noise, randomly sampling each coefficient from a uniform distribution in the interval $[-1,-0.5]$ or $[0.5,1]$, obtaining two groups of tasks similar among themselves. 
We perform linear regression on individual tasks and on aggregated tasks after the first phase of the algorithm or fully applying \algnameshortMain. We repeat the experiment $10$ times to produce confidence intervals, and we consider as metrics the MSE and the coefficient of determination ($R^2$), both in terms of absolute values and of percentage increase \wrt single-task. We obtain a single-task average $R^2$ score of $0.48\pm 0.02$, which increases to $0.64\pm 0.01 (+33.45\pm3.31\%)$ considering the aggregated tasks and $0.67\pm0.01 (+39.82\pm3.42\%)$ adding the feature aggregation. Similarly, the MSE is equal to $9.34\pm0.04$, reducing to $6.54\pm0.03 (-29.44\pm1.45\%)$ and $5.98\pm0.06 (-35.36\pm1.27\%)$. The $L=10$ tasks become $l=2.5\pm0.6$, and the $D=100$ features reduce to $d=3.43\pm1.76$. These results empirically validate the improvement provided by the proposed algorithm \wrt the single-task counterparts in linear regression. Additionally, they show a significant benefit with task-aggregation, with a subsequent feature-aggregation phase refining the performances, with the added value of simplifying models. 

\begin{figure}
\includegraphics[width=\textwidth]{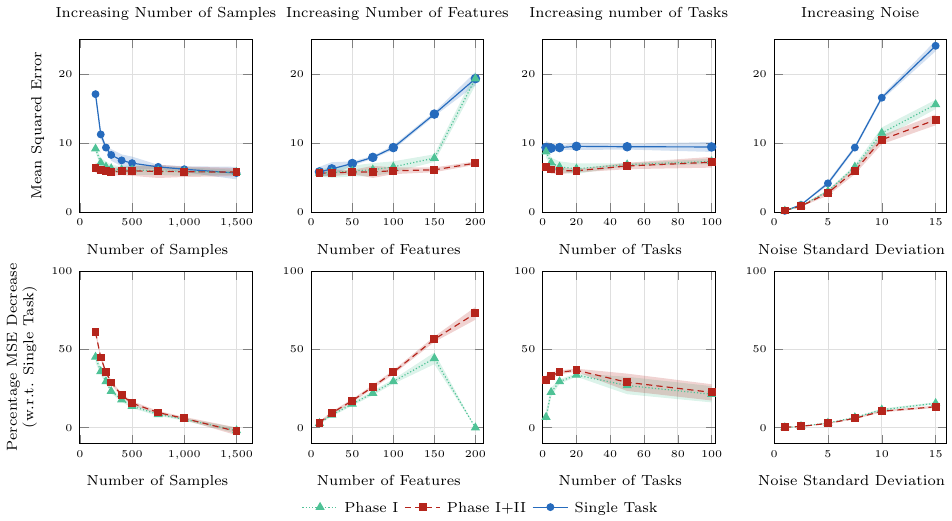}
\caption{Test MSE (first row) and corresponding percentage decrease \wrt single-task (second row), varying one parameter at a time, only aggregating targets (Phase I) or adding the feature-aggregation phase (Phase I+II). 
} \label{fig:synthMSE}
\end{figure}

We further tested the proposed approach, varying one parameter at a time and fixing the others. Figure \ref{fig:synthMSE} reports the results, in terms of MSE and of percentage increase of MSE \wrt the single-task, showing the improvement with task-aggregation and then the combination with feature aggregation. The first column of the figure shows that, with a small number of samples, both phases provide significant improvement, which is mitigated by a large number of samples. In the second column, when the number of features increases, the feature aggregation phase is more relevant. Similarly, the third column shows that the task-aggregation phase is more relevant when the number of tasks increases, assuming a constant number of features. Finally, the fourth column shows that the problem becomes more difficult when the noise of the targets increases, with the task aggregations that become more relevant. 
Figure \ref{fig:synthMSE2} in Appendix \ref{app:synth} also shows the behavior of the MSE when the hyperparameters $\epsilon_1,\epsilon_2$ are varied. In particular, $\epsilon_1$ represents the propensity to aggregate tasks, spanning from the single-task case to a single aggregated task, with a value equal to $0$ that balances the aggregation. 
Similarly, $\epsilon_2$ regulates the propensity to aggregate features, spanning from preserving all original inputs to a single aggregation. 

\subsection{Real World Datasets}
As a first investigation on real data, we consider two classical datasets, 
comparing single-task regression performance, applying \algnameshortMain, and with some standard MTL approaches. In particular, the SARCOS dataset~\citep{vijayakumar2005incremental} is composed of $7$ regression tasks, $21$ shared features, and $n=48933$ samples. We consider $n=1000$ and the full set of $n=48933$ samples to investigate the effect of the number of samples. Then, the \emph{School} dataset~\citep{bakker2003task} consists of $15362$ samples, exam records, distributed across $139$ tasks (schools), with $27$ features. This is an example where different measures of the same features are associated to their tasks, allowing to apply the variant of \algnameshortMain discussed in Remark \ref{rem:variants}. 
In this case, we consider $27$ tasks and $27$ features, as well as the full set of $139$ tasks. 

We apply \algnameshortMain, combined with linear regression (LR), support vector regression (SVR), and multi-layer perceptron (MLP), in comparison with the corresponding single-task models. Additionally, we consider a random prediction as a trivial baseline and Alternating Structural Optimization~\citep[ASO]{ando2005framework}, its convex relaxation~\citep[cASO]{chen2009convex}, and Convex Clustered MTL~\citep[CMTL]{jacob2008clustered}, as benchmarks representing feature-based and clustered-based MTL methods (adapting the implementation of \url{https://github.com/chcorbi/MultiTaskLearning}). In line with the implementation of these baselines, we evaluate test performance with normalized root mean squared error (NRMSE), randomly selecting the $30\%$ of samples, with five different seeds, to produce test confidence intervals. 

\begin{table}[t]
\caption{Experiments on SARCOS and \emph{School} datasets, considering 70\% of data for training and 30\% for testing, randomizing over $5$ seeds for confidence intervals. NRMSE is the performance measure (lower is better, best result in bold).}
\label{tab:realRes}
\vspace{0.2cm}
\centering 
\resizebox{\textwidth}{!}
{
{\begin{tabular}{@{}ccccc@{}} \hline 
 & SARCOS\_1000samples & SARCOS\_full & School\_27tasks & School\_full \\\hline 
\# samples $n$ & $1000$ & $48933$ & $15362$ & $15362$\\
\hline
\# tasks $L$ & $7$ & $7$ & $27$ & $193$\\
Reduced \# tasks (\textbf{ours})  & $6.0\pm0.0$ & $6.0\pm0.0$ & $6.0\pm0.0$ & $23.2\pm2.7$\\
\hline
\# features $D$ & $21$ & $21$ & $27$ (for each task) & $27$ (for each task)\\
Reduced \# features (\textbf{ours})  & $7.1\pm0.3$ & $6.9\pm0.4$ & $27$ (for each aggregation) & $27$ (for each aggregation)\\
\hline
NRMSE random & $0.363 \pm 0.015$ & $0.235\pm 0.011$ & $0.634\pm0.123$ & $0.641\pm0.294$ \\
NRMSE single-task LR & $0.085 \pm 0.001$ & $0.069\pm0.002$ & $0.183\pm0.002$ & $0.165\pm0.003$ \\
NRMSE single-task SVR & $0.142\pm0.010$ & $0.092\pm0.008$ & $0.181\pm0.009$ & $0.162\pm0.005$\\
NRMSE single-task MLP & $0.069\pm0.002$ & $0.045\pm0.003$ & $0.182\pm0.003$ & $0.174\pm0.008$ \\
\hline
NRMSE ASO  & $0.075\pm0.002$ & $0.049\pm0.001 $ & $0.172\pm 0.003$ & $\mathbf{0.152\pm0.006}$ \\
NRMSE cASO  & $0.068\pm0.001$ & $0.048\pm0.001$ & $0.173\pm0.002$ & $0.154\pm0.006$ \\
NRMSE CMTL  & $0.111\pm0.001$ & $0.067\pm0.002$ & $0.843\pm0.316$ & $1.187\pm0.562$ \\
\hline 
NRMSE \algnameshortMain + LR (\textbf{ours}) & $0.054\pm0.003$ & $0.035\pm0.002$ & $0.162\pm0.005$ & $0.159\pm0.002$\\
NRMSE \algnameshortMain + SVR (\textbf{ours}) & $0.154\pm0.021$ & $0.115\pm0.024$ & $\mathbf{0.159\pm0.003}$ & $0.155\pm0.004$ \\
NRMSE \algnameshortMain + MLP (\textbf{ours}) & $\mathbf{0.049\pm0.009}$ & $\mathbf{0.031\pm0.001}$ & $0.160\pm0.004$ & $0.167\pm0.003$\\
\hline
\end{tabular}}
}
\end{table}

The results of Table \ref{tab:realRes} show that \algnameshortMain outperforms single-task models and is competitive \wrt MTL baselines, with the advantage of reducing the number of models and parameters, still preserving the interpretability.

In a second real-world application, we consider the QM9 dataset~\citep{ramakrishnan2014quantum}, a challenging quantum chemistry dataset with $L=19$ tasks (properties of molecules), with $139000$ graph inputs (molecules structures). We averaged node features, position, and edge attributes, obtaining $D=19$ features. Following the experimental setup of~\citep{navon2022multi}, we retrieved the dataset from PyTorch Geometric, randomly selecting $10000$ samples for testing and the others for training, normalizing each task and repeating the experiments three times to produce confidence intervals. 

\begin{table}[t]
\caption{Experiments on QM9 dataset. $10000$ random samples are used for testing ($\sim138000$ for training), $3$ different seeds for confidence intervals. MSE is the performance measure (lower is better). Training time is also reported.}
\label{tab:realRes2}
\vspace{0.2cm}
\centering 
\resizebox{\textwidth}{!}
{
\begin{tabular}{@{}ccccc@{}} \hline 
 QM9 Test Results & \# reduced tasks & \# reduced features & MSE & Time (minutes) \\\hline 
Single-task LR & $19$ & $20$ & $0.969\pm0.049$ & $\sim 1$ \\
Single-task MLP  & $19$ & $20$ & $0.518\pm0.032$ & $\sim 11$\\
\hline
Single-task baseline of~\citep{Tong2022LearnableFF}   & $19$ & $13$ & $0.533\pm0.041$ & NA \\
HPS GNN + RLW  & $19$ & $11+$graph & $0.619\pm0.254$ & $\sim1\times300$epochs \\
Best GNN of~\citep{Tong2022LearnableFF}  & $19$ & $11+$graph & $0.216\pm0.009$ & NA \\
\hline
NonLinCTFA + LR (\textbf{ours})    & $12\pm1.2$ & $10.05\pm2.96$ & $0.955\pm0.038$ & $\sim 2$\\
NonLinCTFA + MLP (\textbf{ours})   & $12\pm1.2$ & $10.05\pm2.96$ & $0.469\pm0.024$ & $\sim14$\\
\hline
\end{tabular}
}
\end{table}

Table \ref{tab:realRes2} shows the performance, in terms of test MSE, of the single-task linear regression (LR) and multi-layer perceptron (MLP), together with the MSE associated to the same models, applying \algnameshortMain. Additionally, we exploited the implementation of LibMTL~\citep{Lin2022LibMTLAP} of some state-of-the-art MTL methods and its integration with the QM9 dataset for further benchmarking. In particular, the architecture of the library compatible with graph neural networks is the hard parameter sharing (HPS GNN) that we trained with all the $14$ weighting strategies implemented in the library (we refer to it for details), up to $300$ epochs, identifying the Random Loss Weighting strategy~\citep[RLW]{lin2021reasonable} as best performing. For further comparison with the literature, in the table, we also show the MSE of the baseline and the best-performing graph neural network (GNN) proposed in~\citep{Tong2022LearnableFF}. Together with the MSE, Table \ref{tab:realRes2} shows the computational time. We can conclude that the \algnameshortMain provides significant aggregations, improving single task performances, especially combined with the MLP. Additionally, it is competitive \wrt GNN-based approaches, without outperforming all of them given its much simpler tabular methodology, as also highlighted by the computational time. 

A final experimental setup shows an application of the \algnameshortMain Algorithm to meteorological data, as depicted in Figure \ref{fig:introductionExamples}, where $L=29934$ European hydrological basins are considered as tasks, each with a satellite signal as target and $D=16$ meteorological measurements and climate indices as inputs. In this context, we apply the variant of the algorithm described in Remark \ref{rem:variants}, with linear regression (given the small amount of monthly measurements $n=102$ and the necessity to preserve the interpertability of the entire workflow). 

\begin{table}[t]
\caption{Experiments on climate dataset with \algnameshortMain, varying the hyperparameter $\epsilon$. The number of aggregations and MSE are obtained cross-validating. 
}
\label{tab:realRes3}
\vspace{0.2cm}
\centering 
\resizebox{\textwidth}{!}
{
\begin{tabular}{@{}cccccc@{}} \hline 
 \algnameshortMain & $\epsilon=1$ (single-task) & $\epsilon=0.5$ & $\epsilon=0.1$ & $\epsilon=0.05$ & $\epsilon=0.01$ \\
 \# reduced tasks & $29934$ & $28024$ & $2354$ & $1345$ & $969$ \\
 MSE & $1.058\pm0.224$ & $1.045\pm0.221$ & $0.758\pm0.148$ & $0.755\pm0.143$ & $0.753\pm0.143$ \\
\hline
& $\epsilon=0$ & $\epsilon=-0.01$ & $\epsilon=-0.05$ & $\epsilon=-0.1$ & $\epsilon=-1$ \\
 \# reduced tasks & $944$ & $844$ & $680$ & $252$ & $1$ \\
 MSE & $0.750\pm0.140$ & $0.746\pm0.142$ & $0.756\pm0.146$ & $0.909\pm0.189$ & $1.132\pm0.193$\\
 \hline
\end{tabular}
}
\end{table}

Table \ref{tab:realRes3} shows confidence intervals, in terms of MSE, associated with different values of the hyperparameter $\epsilon$. As expected, the MSE on the original tasks reduces when reducing the value of $\epsilon$ since the aggregations of tasks are convenient. However, when the hyperparameter is too small, the algorithm aggregates too many tasks, becoming detrimental to the understanding of the behavior of the original tasks until the limit case of a single aggregated task. The average number of reduced tasks is also reported in the table to confirm this behavior.

\section{Conclusions and Future Developments}\label{sec:conclusions}
In this paper, we introduced a two-phase MTL approach that aggregates sets of targets and features with their mean, motivated by meteorological applications, and aimed to improve the final regression performance, preserving interpretability. We provided a bias-variance analysis, considering linear regression, and we empirically validated the approach with synthetic and real-world datasets, showing promising results also outside the context of linear regression. A future development can be an extension of the analysis to general ML models. Additionally, an applicative work with meteorological data is under development.

\bibliographystyle{splncs04}
\bibliography{bibliography.bib}

\newpage

\appendix
\section{Analysis of Variance and proof of Theorem \ref{thm:variance}}\label{app:variance}

In this Section we firstly report the bias-variance decomposition equation, that will be exploited in this section to prove the variance expression reported in the main paper, and the bias in the next section. 

Considering a task $\mathcal{T}_i$, the Mean Squared Error (MSE), can be decomposed into three terms (bias-variance decomposition~\citep{hastie2009}):

\begin{equation}\label{eq:BiasVarDec}
\begin{gathered}
	      \underbrace{\mathbb{E}_{\mathbf{x},y_i,\mathcal{S}_i}[(\mathcal{M}_{\mathcal{S}_i}(\mathbf{x})-y_i)^2]}_{\text{MSE}}
	      = \underbrace{\mathbb{E}_{\mathbf{x},\mathcal{S}_i}[(\mathcal{M}_{\mathcal{S}_i}(\mathbf{x})-\bar{\mathcal{M}}_i(\mathbf{x}))^2]}_{\text{variance}}
	     \\\qquad+\underbrace{\mathbb{E}_{\mathbf{x}}[(\bar{\mathcal{M}_i}(\mathbf{x})-\bar{y}_i(\mathbf{x}))^2]}_{\text{bias}}
	      +\underbrace{\mathbb{E}_{\mathbf{x},y_i}[(\bar{y}_i(\mathbf{x})-y_i)^2]}_{\text{noise}},
	\end{gathered}
\end{equation}
where $\mathbf{x},y_i$ are respectively features and the target of a test sample associated with task $\mathcal{T}_i$ and $\mathcal{S}_i$ is the related training set. $\mathcal{M}_{\mathcal{S}_i}(\cdot)$ is the model trained with $\mathcal{S}_i$, and $\bar{\mathcal{M}}_i(\cdot)$, $\bar{y}_i$, are its expected value \wrt\ $\mathcal{S}_i$ and the expected value of the test target $y_i$ \wrt\ the input features $\mathbf{x}$.

In this section, recalling the data generation process underlying the relationship between features and target of a generic task $i$ as: $y_i=f_i(x_1\dots x_D) + \epsilon_i,\ \epsilon_i\sim \mathcal{N}(0,\sigma_i^2)$, and the general linear regression model under analysis: $\hat{y}_i=\hat{\psi}_\iota=\widehat{\frac{1}{K_\iota}\sum_{k:y_k \in \mathcal{P}_\iota} y_k}=\hat{w}_1^\iota \phi_1 + \dots + \hat{w}_d^\iota \phi_d$, we will firstly focus on a simplified version and subsequently extend the analysis to this more general case. 

\subsection{Linear model of the original features, single-task}

Firstly we estimate the output $y_i$ with a multivariate linear regression on the $D$ original features. Each model is predicted as:
\begin{equation}\label{eq:singleModelOriginalFeatures}
    \hat{y}_i=\hat{w}_1^ix_1 + \dots + \hat{w}_D^ix_D.
\end{equation}
To compute the variance of the linear model, we need the variance and the expected value of the coefficients, which are reported in the following Lemma.

\begin{lemma}\label{lem:varExpCoeff}
    In the asymptotic case, the variance and the expected value (conditioned to the training set feature matrix $\mathbf{X}$) of the linear regression coefficients of the model in Equation \ref{eq:singleModelOriginalFeatures}, trained to estimate the $i$-th task, are respectively:
    \begin{align}
    \begin{split}
var_{\mathcal{T}}&(\mathbf{\hat{w}^i}\lvert\mathbf{X}) 
     = \frac{\sigma^2_i}{(n-1)}\mathcal{P_D} \\
     &= \frac{\sigma^2_i}{(n-1)} \begin{bmatrix}
    \frac{1}{\sigma^2_{x_1|x^{-1}}} & \frac{-\rho_{x_1,x_2|x^{-1,2}}}{\sigma_{x_1|x^{-1}}\cdot\sigma_{x_2|x^{-2}}} & \dots & \frac{-\rho_{x_1,x_D|x^{-1,D}}}{\sigma_{x_1|x^{-1}}\cdot\sigma_{x_D|x^{-D}}}\\
    \dots & \dots & \dots & \dots\\
    \frac{-\rho_{x_1,x_D|x^{-1,D}}}{\sigma_{x_1|x^{-1}}\cdot\sigma_{x_D|x^{-D}}} & \frac{-\rho_{x_2,x_D|x^{-2,D}}}{\sigma_{x_2|x^{-2}}\cdot\sigma_{x_D|x^{-D}}} & \dots & \frac{1}{\sigma^2_{x_D|x^{-D}}}
    \end{bmatrix},
    \end{split}
\end{align}
\begin{align}
\begin{split}
\E_{\mathcal{T}}[\mathbf{\hat{w}^i}\lvert\mathbf{X}] &= \frac{1}{(n-1)}\mathcal{P}_D\mathbf{X}^\intercal \mathbf{y}^i \\
&= \begin{bmatrix} 
    \frac{cov(x_1,f_i(x))}{\sigma^2_{x_1\lvert x^{-1}}} - \sum_{\substack{k=1 \\ k\neq 1}}^{D}\Big( \frac{\rho_{x_1,x_k\lvert x^{-1,k}}}{\sigma_{x_1\lvert x^{-1}}\cdot \sigma_{x_k\lvert x^{-k}}} \Big) cov(x_k,f_i(x))\\
    \dots \\
    \frac{cov(x_D,f_i(x))}{\sigma^2_{x_D\lvert x^{-D}}} - \sum_{\substack{k=1 \\ k\neq D}}^{D}\Big( \frac{\rho_{x_D,x_k\lvert x^{-D,k}}}{\sigma_{x_D\lvert x^{-D}}\cdot \sigma_{x_k\lvert x^{-k}}} \Big) cov(x_k,f_i(x))
\end{bmatrix},
\end{split}
\end{align}
    where $\mathcal{P_D}$ is the inverse of the covariance matrix, i.e., the precision matrix, $\sigma_{x_j|x^{-j}}$ is the partial variance of a feature $x_j$ given the others, and $\rho_{x_j,x_k|x^{-j,k}}$ is the partial correlation between the features $x_j,x_k$, given the others.
\end{lemma}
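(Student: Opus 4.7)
\textbf{Proof plan for Lemma \ref{lem:varExpCoeff}.}

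The plan is to start from the closed-form ordinary least squares (OLS) estimator $\hat{\mathbf{w}}^i = (\mathbf{X}^\intercal \mathbf{X})^{-1} \mathbf{X}^\intercal \mathbf{y}^i$, take its mean and variance conditional on $\mathbf{X}$, and then interpret the entries of the resulting precision matrix in terms of partial variances and partial correlations. Throughout, I will use that $\mathbf{y}^i = \mathbf{f}_i(\mathbf{X}) + \boldsymbol{\epsilon}_i$ with $\boldsymbol{\epsilon}_i \sim \mathcal{N}(\mathbf{0}, \sigma_i^2 \mathbf{I})$ independent of $\mathbf{X}$, and that in the asymptotic regime, by the law of large numbers, $\tfrac{1}{n-1}\mathbf{X}^\intercal \mathbf{X}$ converges in probability to the population covariance matrix $\Sigma_{\mathbf{x}}$ of the zero-mean features, so $(\mathbf{X}^\intercal \mathbf{X})^{-1}$ converges to $\tfrac{1}{n-1}\mathcal{P}_D$ with $\mathcal{P}_D := \Sigma_{\mathbf{x}}^{-1}$.

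For the variance, substituting $\mathbf{y}^i$ into the OLS formula gives $\hat{\mathbf{w}}^i = (\mathbf{X}^\intercal \mathbf{X})^{-1} \mathbf{X}^\intercal \mathbf{f}_i(\mathbf{X}) + (\mathbf{X}^\intercal \mathbf{X})^{-1} \mathbf{X}^\intercal \boldsymbol{\epsilon}_i$. Conditioning on $\mathbf{X}$, only the second term is random, so I would apply the sandwich formula $\mathrm{var}_{\mathcal{T}}(\hat{\mathbf{w}}^i \mid \mathbf{X}) = (\mathbf{X}^\intercal \mathbf{X})^{-1} \mathbf{X}^\intercal (\sigma_i^2 \mathbf{I}) \mathbf{X} (\mathbf{X}^\intercal \mathbf{X})^{-1} = \sigma_i^2 (\mathbf{X}^\intercal \mathbf{X})^{-1}$. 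Passing to the asymptotic limit yields $\frac{\sigma_i^2}{n-1}\mathcal{P}_D$. For the expected value, since $\E[\boldsymbol{\epsilon}_i \mid \mathbf{X}] = \mathbf{0}$, the conditional expectation is $\E_{\mathcal{T}}[\hat{\mathbf{w}}^i \mid \mathbf{X}] = (\mathbf{X}^\intercal \mathbf{X})^{-1}\mathbf{X}^\intercal \mathbf{f}_i(\mathbf{X})$; in the asymptotic limit, $\tfrac{1}{n-1}\mathbf{X}^\intercal \mathbf{f}_i(\mathbf{X})$ converges to the vector of covariances $[\mathrm{cov}(x_k, f_i(\mathbf{x}))]_{k=1}^D$ (using $\E[x_k] = \E[f_i(\mathbf{x})] = 0$), so the expression reduces to $\mathcal{P}_D \cdot [\mathrm{cov}(x_k, f_i(\mathbf{x}))]_{k=1}^D$.

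The main obstacle, and what distinguishes this lemma from a textbook OLS derivation, is identifying the entries of $\mathcal{P}_D = \Sigma_{\mathbf{x}}^{-1}$ with partial variances and partial correlations. I would invoke the standard identity from multivariate statistics expressing the precision matrix element-wise in terms of conditional (partial) second-order moments: the $j$-th diagonal entry equals $1/\sigma^2_{x_j \mid x^{-j}}$ (the reciprocal of the residual variance of $x_j$ regressed on the remaining features), while the off-diagonal $(j,k)$ entry equals $-\rho_{x_j, x_k \mid x^{-j,k}} / (\sigma_{x_j \mid x^{-j}} \sigma_{x_k \mid x^{-k}})$. This can be verified by writing $\Sigma_{\mathbf{x}}$ in block form, applying the Schur complement inversion formula to the $1\times 1$ block corresponding to $x_j$, and recognizing the Schur complement as the partial variance, and repeating for the $2\times 2$ block to obtain the off-diagonal identification.

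Once these identifications are plugged into $\frac{\sigma_i^2}{n-1}\mathcal{P}_D$ and into $\mathcal{P}_D \cdot [\mathrm{cov}(x_k, f_i(\mathbf{x}))]_{k=1}^D$, the two displayed expressions of the lemma follow by direct row-by-row expansion of the matrix-vector product, which is routine bookkeeping. The only subtle point is the implicit exchange of limit and expectation/variance, which is justified by the assumption in the statement that each estimator converges in probability to the quantity it estimates, together with the asymptotic invertibility of $\Sigma_{\mathbf{x}}$.
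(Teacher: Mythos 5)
Your proposal is correct and follows essentially the same route as the paper: both start from the closed-form OLS estimator, obtain $\mathrm{var}(\hat{\mathbf{w}}^i\mid\mathbf{X})=\sigma_i^2(\mathbf{X}^\intercal\mathbf{X})^{-1}$ and $\E[\hat{\mathbf{w}}^i\mid\mathbf{X}]=(\mathbf{X}^\intercal\mathbf{X})^{-1}\mathbf{X}^\intercal\mathbf{y}^i$, pass to the asymptotic limit where $\tfrac{1}{n-1}\mathbf{X}^\intercal\mathbf{X}$ becomes the population covariance matrix, and then invoke the standard representation of the precision-matrix entries via partial variances and partial correlations. Your Schur-complement justification of that last identification is slightly more explicit than the paper's, which simply asserts the rewriting, but the argument is the same.
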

\begin{proof}
    Recalling that in general, for a linear regression model \citep{johnson2007}:

\begin{equation*}
\label{eq:VarEst}
var_{\mathcal{T}}(\hat{w}\lvert\mathbf{X}) = (\mathbf{X}^T\mathbf{X})^{-1}\sigma^2,
\end{equation*}

we compute the variance of the estimated coefficients conditioned on the training set:

\begin{align*}
var_{\mathcal{T}}(\mathbf{\hat{w}^i}\lvert\mathbf{X}) 
    &= \Bigg(\begin{bmatrix} x^1_1 & \dots & x^n_1 \\ 
    & \dots & \\
    x_D^1 & \dots & x_D^n \end{bmatrix} \begin{bmatrix} x^1_1 & \dots & x^1_D \\  & \dots & \\ x_1^n & \dots & x_D^n \end{bmatrix}\Bigg)^{-1}\sigma^2_i \\
    &= \Bigg(\begin{bmatrix}
    \sum_{i=1}^n(x_1^i)^2 & \sum_{i=1}^n(x_1^ix_2^i) & \dots & \sum_{i=1}^n(x_1^ix_D^i)\\
    \sum_{i=1}^n(x_1^ix_2^i) & \sum_{i=1}^n(x_2^i)^2 & \dots & \sum_{i=1}^n(x_2^ix_D^i)\\
    \dots & \dots & \dots & \dots\\
    \sum_{i=1}^n(x_1^ix_D^i) & \sum_{i=1}^n(x_2^ix_D^i) & \dots & \sum_{i=1}^n(x_D^i)^2
    \end{bmatrix}\Bigg)^{-1}\sigma^2_i \\
    &= \frac{\sigma^2_i}{(n-1)} \Bigg(\begin{bmatrix}
    \hat{\sigma}^2_{x_1} & \hat{cov}(x_1,x_2) & \dots & \hat{cov}(x_1,x_D)\\
    \dots & \dots & \dots & \dots\\
    \hat{cov}(x_1,x_D) & \hat{cov}(x_2,x_D) & \dots & \hat{\sigma}^2_{x_D}
    \end{bmatrix}\Bigg)^{-1}\\
    &= \frac{\sigma^2}{(n-1)}\mathcal{P_D}. 
\end{align*}

The last equality holds since the inverse of the covariance matrix of the $D$ original features is the precision matrix $\mathcal{P_D}$ (or concentration matrix), which can be rewritten as follows (directly considering the asymptotic case to substitute the estimators with the quantities they estimate):

\begin{equation*}
    \mathcal{P_D} = \begin{bmatrix}
    \frac{1}{\sigma^2_{x_1|x^{-1}}} & \frac{-\rho_{x_1,x_2|x^{-1,2}}}{\sigma_{x_1|x^{-1}}\cdot\sigma_{x_2|x^{-2}}} & \dots & \frac{-\rho_{x_1,x_D|x^{-1,D}}}{\sigma_{x_1|x^{-1}}\cdot\sigma_{x_D|x^{-D}}}\\
    \dots & \dots & \dots & \dots\\
    \frac{-\rho_{x_1,x_D|x^{-1,D}}}{\sigma_{x_1|x^{-1}}\cdot\sigma_{x_D|x^{-D}}} & \frac{-\rho_{x_2,x_D|x^{-2,D}}}{\sigma_{x_2|x^{-2}}\cdot\sigma_{x_D|x^{-D}}} & \dots & \frac{1}{\sigma^2_{x_D|x^{-D}}}
    \end{bmatrix}.
\end{equation*}
In the equation, the partial variance is the variance of the residual of the regression of the other features on the current one, while the partial correlation is the correlation of the residual of the linear regression of the other features on them.

This proves the expression of variance of the lemma. Similarly, with the expected values, we have:

\begin{align*}
\E_{\mathcal{T}}[\mathbf{\hat{w}^i}\lvert\mathbf{X}] & = (\mathbf{X}^\intercal\mathbf{X})^{-1}\mathbf{X}^\intercal \mathbf{y_i} = \frac{1}{(n-1)}\mathcal{P}_D\mathbf{X}^\intercal \mathbf{y_i} \\
& = \frac{1}{(n-1)} \begin{bmatrix} \sum_{j=1}^n \Big[\frac{x_1^j}{\sigma^2_{x_1\lvert x^{-1}}} - \sum_{\substack{k=1 \\ k\neq 1}}^{D} \frac{\rho_{x_1,x_k\lvert x^{-1,k}}\cdot x_k^j}{\sigma_{x_1\lvert x^{-1}}\cdot \sigma_{x_k\lvert x^{-k}}} \Big] \cdot f_i(x)^j \\
\dots \\
\sum_{j=1}^n \Big[\frac{x_D^j}{\sigma^2_{x_D\lvert x^{-D}}} - \sum_{\substack{k=1 \\ k\neq D}}^{D} \frac{\rho_{x_k,x_D\lvert x^{-k,D}}\cdot x_k^j}{\sigma_{x_k\lvert x^{-k}}\cdot \sigma_{x_D\lvert x^{-D}}} \Big] \cdot f_i(x)^j
\end{bmatrix}\\
&= \begin{bmatrix} 
    \frac{cov(x_1,y)}{\sigma^2_{x_1\lvert x^{-1}}} - \sum_{\substack{k=1 \\ k\neq 1}}^{D}\Big( \frac{\rho_{x_1,x_k\lvert x^{-1,k}}}{\sigma_{x_1\lvert x^{-1}}\cdot \sigma_{x_k\lvert x^{-k}}} \Big) cov(x_k,f_i(x))\\
    \dots \\
    \frac{cov(x_D,y)}{\sigma^2_{x_D\lvert x^{-D}}} - \sum_{\substack{k=1 \\ k\neq D}}^{D}\Big( \frac{\rho_{x_D,x_k\lvert x^{-D,k}}}{\sigma_{x_D\lvert x^{-D}}\cdot \sigma_{x_k\lvert x^{-k}}} \Big) cov(x_k,f_i(x))
\end{bmatrix}
\end{align*}

This concludes the proofs, showing the second equality reported in the lemma.

\end{proof}

\begin{remark}
In the case of $D$-dimensional linear data generation process, the expected value of the coefficients is the vector of the coefficients of the data generation process itself, and the model is unbiased, as expected.
\end{remark}

We are now ready to show the variance of the model in this single-task setting.

\begin{theorem}

    In the asymptotic case, the variance of the linear regression model of Equation \ref{eq:singleModelOriginalFeatures}, trained to estimate the $i$-th task, is related to the characteristics of the features and with the coefficients of the linear regression model as:
    \begin{align*}
        var_D^i&=\frac{\sigma^2_i}{(n-1)}\cdot \sum_{k=1}^{D} \Big\{ \frac{\sigma^2_{x_k}}{\sigma^2_{x_k\lvert x^{-k}}} - \sum_{j=1,j\neq k}^{D}\frac{cov(x_k,x_j)\cdot \rho_{x_k,x_j\lvert x^{-k,j}}}{\sigma_{x_k\lvert x^{-k}}\cdot \sigma_{x_j\lvert x^{-j}}} \Big\}\\
        &= \frac{\sigma^2_i}{(n-1)}\cdot \sum_{k=1}^{D}\E_{\mathcal{T}}[\hat{w}^k_k\lvert\mathbf{X}],
    \end{align*}

    where $\E_{\mathcal{T}}[\hat{w}^k_k\lvert\mathbf{X}]$ is the expected value of the coefficient associated with the variable $x_k$ of the regression of the full set of features (including the variable $x_k$ itself) on $x_k$, which is equal to $1$ (and the other coefficients equal to $0$) when the features are independent.
    
    Furthermore, it is equal to:

    \begin{align*}
    var_D^i=& \frac{\sigma^2_i}{(n-1)}\cdot D.
    \end{align*}
    
\end{theorem}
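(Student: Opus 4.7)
The plan is to build the variance of the predicted model from the variance of the coefficient vector supplied by Lemma~\ref{lem:varExpCoeff}, and then contract the resulting quadratic form with the covariance matrix of the features. More precisely, since $\hat{y}_i(\mathbf{x}) = \mathbf{x}^\intercal \hat{\mathbf{w}}^i$ is linear in $\hat{\mathbf{w}}^i$ given a test point $\mathbf{x}$, I would first write
\begin{equation*}
var_{\mathcal{T}}(\hat{y}_i(\mathbf{x}) \mid \mathbf{x}, \mathbf{X}) \;=\; \mathbf{x}^\intercal\, var_{\mathcal{T}}(\hat{\mathbf{w}}^i \mid \mathbf{X})\, \mathbf{x} \;=\; \frac{\sigma_i^2}{n-1}\,\mathbf{x}^\intercal \mathcal{P}_D\,\mathbf{x},
\end{equation*}
using the variance of the coefficients from Lemma~\ref{lem:varExpCoeff}. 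Then I would average over the test point $\mathbf{x}$, exploiting the zero-mean assumption so that $\mathbb{E}_{\mathbf{x}}[\mathbf{x}\mathbf{x}^\intercal] = \Sigma_D$ is the covariance matrix of the features, and use the cyclic property of the trace to get $var_D^i = \frac{\sigma_i^2}{n-1}\,\mathrm{tr}(\Sigma_D \mathcal{P}_D)$.

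The next step is to expand this trace entry by entry. Reading off the diagonal entries of the product $\Sigma_D \mathcal{P}_D$ by combining the closed form of $\mathcal{P}_D$ displayed in Lemma~\ref{lem:varExpCoeff} with the identities $(\Sigma_D)_{kk} = \sigma_{x_k}^2$ and $(\Sigma_D)_{kj} = cov(x_k, x_j)$ yields exactly
\begin{equation*}
(\Sigma_D \mathcal{P}_D)_{kk} \;=\; \frac{\sigma_{x_k}^2}{\sigma_{x_k \mid x^{-k}}^2} \;-\; \sum_{j \neq k} \frac{cov(x_k, x_j)\,\rho_{x_k, x_j \mid x^{-k,j}}}{\sigma_{x_k \mid x^{-k}}\,\sigma_{x_j \mid x^{-j}}},
\end{equation*}
which, summed over $k$, reproduces the first expression in the theorem.

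For the second equality, the key observation is that the quantity in braces is precisely the expression given by Lemma~\ref{lem:varExpCoeff} for $\mathbb{E}_{\mathcal{T}}[\hat{w}_k^k \mid \mathbf{X}]$, namely the expected OLS coefficient on $x_k$ when the target itself is $x_k$ (substitute $f_i(x) = x_k$ in the expected-value formula so that $cov(x_k, f_i(x)) = \sigma_{x_k}^2$ and $cov(x_j, f_i(x)) = cov(x_k, x_j)$). Finally, since regressing $x_k$ on the full feature set $\{x_1, \dots, x_D\}$ (which includes $x_k$ itself) trivially recovers the identity map, the coefficient on $x_k$ equals $1$ asymptotically and all other coefficients vanish; equivalently, $\Sigma_D \mathcal{P}_D = I_D$, so $\mathrm{tr}(\Sigma_D \mathcal{P}_D) = D$, yielding $var_D^i = \frac{\sigma_i^2}{n-1}\cdot D$.

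The main obstacle is not deep, but purely bookkeeping: one needs to be careful with the signs coming from the off-diagonal entries of the precision matrix (the $-\rho_{x_k,x_j \mid x^{-k,j}}$ factor) and with disentangling the double role of the index $k$ in the compact notation $\hat{w}_k^k$ (task index $=$ feature index). A sanity check that everything is consistent is the trivial identity $\Sigma_D \mathcal{P}_D = I_D$, which both confirms the trace computation and matches the ``coefficient equals one'' interpretation on the diagonal.
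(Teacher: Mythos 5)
Your proposal is correct and follows essentially the same route as the paper: both reduce the model variance to the contraction of the coefficient covariance $\frac{\sigma_i^2}{n-1}\mathcal{P}_D$ from Lemma~\ref{lem:varExpCoeff} against the feature covariance matrix, read off the diagonal of $\Sigma_D\mathcal{P}_D$ to obtain the first expression, identify each braced term with $\E_{\mathcal{T}}[\hat{w}^k_k\lvert\mathbf{X}]$ by substituting $f_i(x)=x_k$ in the expected-coefficient formula, and conclude from $\Sigma_D\mathcal{P}_D=I_D$. Your $\mathrm{tr}(\Sigma_D\mathcal{P}_D)$ formulation is, if anything, a slightly cleaner justification of the final step than the paper's sum-of-entries argument, but the substance is identical.
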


\begin{proof}
    By definition of variance of the model, in this setting it is defined as $var_D^i:=\E_{x}\E_\mathcal{T}[(\hat{w}_1^ix_1+\dots+\hat{w}_D^ix_D-\E_{\mathcal{T}}[\hat{w}^i_1x_1+\dots+\hat{w}^i_Dx_D])^2]$. Recalling the expression of variance of the coefficients of the previous lemma, in the asymptotic case we get:

\begin{align*}
    var_D^i=&\E_{x}\E_\mathcal{T}[(\hat{w}_1^ix_1+\dots+\hat{w}_D^ix_D-\E_{\mathcal{T}}[\hat{w}^i_1x_1+\dots+\hat{w}^i_Dx_D])^2]\\
    =& \sum_{k=1}^{D}\sigma^2_{x_k} \cdot var_\mathcal{T}(\hat{w}^i_k) + 2\sum_{k=1}^{D-1}\sum_{j=k+1}^D cov_x(x_k,x_j)\cdot cov_\mathcal{T}(\hat{w}_k^i,\hat{w}_j^i)\\
    =& \frac{\sigma^2_i}{(n-1)}\cdot\Big\{ \sum_{k=1}^D\frac{\sigma^2_{x_k}}{\sigma^2_{x_k\lvert x^{-k}}} - 2 \sum_{k=1}^{D-1}\sum_{j=k+1}^D \frac{cov(x_k,x_j)\cdot \rho_{x_k,x_j\lvert x^{-k,j}}}{\sigma_{x_k\lvert x^{-k}}\cdot \sigma_{x_j\lvert x^{-j}}} \Big\}.
\end{align*}

Firstly, exploiting the expression of expected values of coefficients adapted from the previous lemma, we prove the first expression of the theorem:

\begin{align*}
    var_D^i=& \frac{\sigma^2_i}{(n-1)}\cdot\Big\{ \sum_{k=1}^D\frac{\sigma^2_{x_k}}{\sigma^2_{x_k\lvert x^{-k}}} - 2 \sum_{k=1}^{D-1}\sum_{j=k+1}^D \frac{cov(x_k,x_j)\cdot \rho_{x_k,x_j\lvert x^{-k,j}}}{\sigma_{x_k\lvert x^{-k}}\cdot \sigma_{x_j\lvert x^{-j}}} \Big\}\\
    =& \frac{\sigma^2_i}{(n-1)}\cdot \sum_{k=1}^{D} \Big\{ \frac{\sigma^2_{x_k}}{\sigma^2_{x_k\lvert x^{-k}}} - \sum_{j=1,j\neq k}^{D}\frac{cov(x_k,x_j)\cdot \rho_{x_k,x_j\lvert x^{-k,j}}}{\sigma_{x_k\lvert x^{-k}}\cdot \sigma_{x_j\lvert x^{-j}}} \Big\}\\
    &= \frac{\sigma^2_i}{(n-1)}\cdot \sum_{k=1}^{D}\E_{\mathcal{T}}[\hat{w}^k_k\lvert\mathbf{X}].
\end{align*}

For the second result, recalling the expression of the variance, The following considerations hold:

\begin{align*}
    var_D^i=& \sum_{k=1}^{D}\sum_{j=1}^D cov_x(x_k,x_j)\cdot cov_\mathcal{T}(\hat{w}_k^i,\hat{w}_j^i).
\end{align*}
This is the sum of $D\times D$ elements, basically the sum of the elements of the Hadamard product between the elements of the covariance matrix of the coefficient and the covariance matrix of the features. Recalling that $var_{\mathcal{T}}(\hat{w}^i_D\lvert\mathbf{X}) 
     = \frac{\sigma^2_i}{(n-1)}(\mathbf{X}^\intercal\mathbf{X})^{-1}$ and $cov(\mathbf{x})=\mathbf{X}^\intercal\mathbf{X}$, the following equivalences hold, proving the result:

\begin{align*}
    &var_D^i = \sum_{k=1}^{D}\sum_{j=1}^D cov_x(x_k,x_j)\cdot cov_\mathcal{T}(\hat{w}_k^i,\hat{w}_j^i)=\\
    & = \frac{\sigma^2_i}{(n-1)} \begin{bmatrix}
        1 & \dots & 1
    \end{bmatrix} (\mathbf{X}^\intercal\mathbf{X})^{-1}(\mathbf{X}^\intercal\mathbf{X}) \begin{bmatrix}
        1 \\ \dots \\ 1
    \end{bmatrix}  
    =\frac{\sigma^2_i}{(n-1)} D .
\end{align*}
\end{proof}

\begin{remark}
    The result can be trivially confirmed in the settings that consider two and three input features, where the inverse $(\mathbf{X}^\intercal\mathbf{X})^{-1}$ can be computed explicitly, exploiting the closed form of the inverses of $2\times2,3\times3$ matrices.
\end{remark}

\subsection{Extension to transformed features and aggregated targets}
In this section, we elaborate on the results of the previous subsection. 

Firstly, given the original targets $y_1\dots y_L$, we apply a multivariate linear regression after having aggregated clusters of targets, resulting in $l$ aggregated targets $\psi_1\dots\psi_l$. We still consider the $D$ original features. Focusing again on the $i$-th task with target $y_i$ and assuming that it is associated with the $\iota$ cluster $\psi_\iota = \frac{1}{K}\sum_{k:y_k \in \mathcal{P}_\iota} y_k$, we estimate the target $y_i$ with:
$$\hat{y}_i=\hat{\psi}_\iota=\widehat{\frac{1}{K_\iota}\sum_{k:y_k \in \mathcal{P}_\iota} y_k}=\hat{w}_1^\iota x_1 + \dots + \hat{w}_D^\iota x_D.$$

Defining $\bar{\sigma}_\iota^2:=var(\frac{1}{K_\iota}\sum_{k:y_k \in \mathcal{P}_\iota} \epsilon_k)$, recalling that we have the same features of the previous subsection, we can write the variance of the coefficients as:

\begin{align*}
var_{\mathcal{T}}(\hat{w}^\iota_D\lvert\mathbf{X}) 
     &= \frac{\bar{\sigma}_\iota^2}{(n-1)}\mathcal{P_D} \\
     &= \frac{\bar{\sigma}_\iota^2}{(n-1)} \begin{bmatrix}
    \frac{1}{\sigma^2_{x_1|x^{-1}}} & \frac{-\rho_{x_1,x_2|x^{-1,2}}}{\sigma_{x_1|x^{-1}}\cdot\sigma_{x_2|x^{-2}}} & \dots & \frac{-\rho_{x_1,x_D|x^{-1,D}}}{\sigma_{x_1|x^{-1}}\cdot\sigma_{x_D|x^{-D}}}\\
    \dots & \dots & \dots & \dots\\
    \frac{-\rho_{x_1,x_D|x^{-1,D}}}{\sigma_{x_1|x^{-1}}\cdot\sigma_{x_D|x^{-D}}} & \frac{-\rho_{x_2,x_D|x^{-2,D}}}{\sigma_{x_2|x^{-2}}\cdot\sigma_{x_D|x^{-D}}} & \dots & \frac{1}{\sigma^2_{x_D|x^{-D}}}
    \end{bmatrix}. 
\end{align*}

Then, we estimate each task with a single-task multivariate linear regression on the $d$ reduced features $\phi_1\dots\phi_d$. The $i$-th model is therefore predicted as:
$$\hat{y}_i=\hat{w}_1^i\phi_1 + \dots + \hat{w}_d^i\phi_d.$$

The variance of the coefficients, therefore, becomes:

\begin{align*}
var_{\mathcal{T}}(\hat{w}^i_d\lvert\mathbf{X}) 
     &= \frac{\sigma^2_i}{(n-1)}\mathcal{P}_d(\phi) \\
     &= \frac{\sigma^2_i}{(n-1)} \begin{bmatrix}
    \frac{1}{\sigma^2_{\phi_1|\phi^{-1}}} & \frac{-\rho_{\phi_1,\phi_2|\phi^{-1,2}}}{\sigma_{\phi_1|\phi^{-1}}\cdot\sigma_{\phi_2|\phi^{-2}}} & \dots & \frac{-\rho_{\phi_1,\phi_D|\phi^{-1,d}}}{\sigma_{\phi_1|\phi^{-1}}\cdot\sigma_{\phi_d|\phi^{-d}}}\\
    \dots & \dots & \dots & \dots\\
    \frac{-\rho_{\phi_1,\phi_d|\phi^{-1,d}}}{\sigma_{\phi_1|\phi^{-1}}\cdot\sigma_{\phi_d|\phi^{-d}}} & \frac{-\rho_{\phi_2,\phi_d|\phi^{-2,d}}}{\sigma_{\phi_2|\phi^{-2}}\cdot\sigma_{\phi_d|\phi^{-d}}} & \dots & \frac{1}{\sigma^2_{\phi_d|\phi^{-d}}}
    \end{bmatrix}. 
\end{align*}

Finally, combining both aggregations of targets and features, we get the following variance of the coefficients:

\begin{align*}
var_{\mathcal{T}}(\hat{w}^\iota_d\lvert\mathbf{X}) 
     &= \frac{\bar{\sigma}_\iota^2}{(n-1)}\mathcal{P}_d(\phi) \\
     &= \frac{\bar{\sigma}_\iota^2}{(n-1)} \begin{bmatrix}
    \frac{1}{\sigma^2_{\phi_1|\phi^{-1}}} & \frac{-\rho_{\phi_1,\phi_2|\phi^{-1,2}}}{\sigma_{\phi_1|\phi^{-1}}\cdot\sigma_{\phi_2|\phi^{-2}}} & \dots & \frac{-\rho_{\phi_1,\phi_D|\phi^{-1,d}}}{\sigma_{\phi_1|\phi^{-1}}\cdot\sigma_{\phi_d|\phi^{-d}}}\\
    \dots & \dots & \dots & \dots\\
    \frac{-\rho_{\phi_1,\phi_d|\phi^{-1,d}}}{\sigma_{\phi_1|\phi^{-1}}\cdot\sigma_{\phi_d|\phi^{-d}}} & \frac{-\rho_{\phi_2,\phi_d|\phi^{-2,d}}}{\sigma_{\phi_2|\phi^{-2}}\cdot\sigma_{\phi_d|\phi^{-d}}} & \dots & \frac{1}{\sigma^2_{\phi_d|\phi^{-d}}}
    \end{bmatrix}. 
\end{align*}

Therefore, we can exploit these results to obtain the expression of the variance in the three configurations. Note that the last configuration is the most general. It contains all the previous ones as particular cases, and it \textbf{proves Theorem \ref{thm:variance} of the main paper}.

\begin{theorem}
    In the asymptotic case, estimating the $i$-th task with the average $\psi_\iota$ of a set of tasks that contains $y_i$, the variance of the model is:
    
    \begin{align*}
    var_D^\iota=& \frac{\bar{\sigma}_\iota^2}{(n-1)}\cdot D.
\end{align*}

On the other hand, estimating the $i$-th task in a single-task setting, with a set of transformed features $\phi_1\dots\phi_d$, the variance of the model is:

\begin{equation*}
    var_d^i = \frac{\sigma^2_i}{(n-1)}\cdot d.
\end{equation*}

Finally, combining the aggregation of targets and the transformation of features, we get the expression reported in Theorem \ref{thm:variance} in the main paper:

\begin{equation*}
    var_d^\iota = \frac{\bar{\sigma}_\iota^2}{(n-1)}\cdot d.
\end{equation*}
\end{theorem}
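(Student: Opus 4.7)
The plan is to reduce each of the three claims to the matrix identity already established in the original-feature, single-task case, namely $\sum_{k,j}[\operatorname{cov}(\mathbf{x})]_{kj}\bigl[\operatorname{cov}(\mathbf{x})^{-1}\bigr]_{kj} = D$, which is the key collapse used in the base proof. The observation I would start from is that the two operations at hand (averaging targets, transforming features) affect the classical least-squares variance formula $\operatorname{var}_\mathcal{T}(\hat{\mathbf{w}}\mid\text{design})=\sigma^{2}(\text{design}^\intercal\text{design})^{-1}$ in two \emph{independent} places, and therefore can be plugged in one at a time.

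First, I would argue that replacing the target $y_i$ by $\psi_\iota$ does not alter the design matrix, so the estimator is simply $\hat{\mathbf{w}}^\iota=(\boldsymbol{\Phi}^\intercal\boldsymbol{\Phi})^{-1}\boldsymbol{\Phi}^\intercal\boldsymbol{\psi}_\iota$; its noise component is $\tfrac{1}{K_\iota}\sum_{k:y_k\in\mathcal{P}_\iota}\epsilon_k$, a zero-mean Gaussian of variance $\bar{\sigma}_\iota^{2}$, independent of $\boldsymbol{\Phi}$ by the data-generation hypothesis. Hence the standard identity yields $\operatorname{var}_\mathcal{T}(\hat{\mathbf{w}}^\iota\mid\boldsymbol{\Phi})=\bar{\sigma}_\iota^{2}(\boldsymbol{\Phi}^\intercal\boldsymbol{\Phi})^{-1}$, which passes in probability to $\tfrac{\bar{\sigma}_\iota^{2}}{n-1}\operatorname{cov}(\boldsymbol{\phi})^{-1}$ by the paper's standing asymptotic assumption. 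This is precisely the coefficient-covariance expression reported just before the theorem.

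Second, I would compute the model variance with the same expansion already used in the base case:
\[
var_d^\iota \;=\; \sum_{k=1}^{d}\sum_{j=1}^{d}\operatorname{cov}(\phi_k,\phi_j)\,\operatorname{cov}_\mathcal{T}(\hat{w}_k^\iota,\hat{w}_j^\iota),
\]
substitute the coefficient covariance obtained in the previous step, and recognize the double sum as a trace,
\[
var_d^\iota \;=\; \frac{\bar{\sigma}_\iota^{2}}{n-1}\,\operatorname{tr}\!\bigl(\operatorname{cov}(\boldsymbol{\phi})\cdot\operatorname{cov}(\boldsymbol{\phi})^{-1}\bigr) \;=\; \frac{\bar{\sigma}_\iota^{2}}{n-1}\cdot d,
\]
exactly as in the original proof, where the same step was written as $[1,\dots,1](\mathbf{X}^\intercal\mathbf{X})^{-1}(\mathbf{X}^\intercal\mathbf{X})[1,\dots,1]^\intercal$. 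The two specializations follow by substitution: taking $\boldsymbol{\phi}=\mathbf{x}$ (so $d=D$) gives the first formula, and taking $\psi_\iota=y_i$ (so $\bar{\sigma}_\iota^{2}=\sigma_i^{2}$) gives the second.

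The main (but mild) obstacle is justifying that the asymptotic substitution of sample quantities by population ones remains valid in this combined setting: one must check that $(n-1)^{-1}\boldsymbol{\Phi}^\intercal\boldsymbol{\Phi}\to\operatorname{cov}(\boldsymbol{\phi})$ in probability and that the aggregated noise $\tfrac{1}{K_\iota}\sum\epsilon_k$ is independent of $\boldsymbol{\Phi}$, so that the conditional-variance identity applies unchanged. Both follow from the standing convergence-in-probability assumption and from the noise–feature independence built into the data-generation process, after which no new calculation beyond the one already executed in the preceding subsection is required.
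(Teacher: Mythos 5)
Your proposal is correct and follows essentially the same route as the paper: compute the coefficient covariance $\bar{\sigma}_\iota^{2}(\boldsymbol{\Phi}^\intercal\boldsymbol{\Phi})^{-1}$ (unchanged design, aggregated noise independent of the features), expand the model variance as $\sum_{k,j}\operatorname{cov}(\phi_k,\phi_j)\operatorname{cov}_{\mathcal{T}}(\hat{w}_k^\iota,\hat{w}_j^\iota)$, and collapse the double sum to $d$. The only cosmetic difference is that you identify the collapse as $\operatorname{tr}\bigl(\operatorname{cov}(\boldsymbol{\phi})\operatorname{cov}(\boldsymbol{\phi})^{-1}\bigr)=d$, which is a slightly cleaner justification than the paper's all-ones sandwich $\begin{bmatrix}1&\dots&1\end{bmatrix}(\boldsymbol{\Phi}^\intercal\boldsymbol{\Phi})^{-1}(\boldsymbol{\Phi}^\intercal\boldsymbol{\Phi})\begin{bmatrix}1&\dots&1\end{bmatrix}^\intercal$ of the same identity.
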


\begin{proof}
Exploiting the expression of variance of the coefficients, with similar arguments of the previous subsection, in the first case, we get:
\begin{align*}
    &var_D^\iota = \sum_{k=1}^{D}\sum_{j=1}^D cov_x(x_k,x_j)\cdot cov_\mathcal{T}(\hat{w}_k^\iota,\hat{w}_j^\iota)=\\
    & = \frac{\sigma^2_\iota}{(n-1)} \begin{bmatrix}
        1 & \dots & 1
    \end{bmatrix} (\mathbf{X}^\intercal\mathbf{X})^{-1}(\mathbf{X}^\intercal\mathbf{X}) \begin{bmatrix}
        1 \\ \dots \\ 1
    \end{bmatrix}  
    =\frac{\sigma^2_i}{(n-1)} D .
\end{align*}

In the second case we get:
\begin{align*}
    &var_d^i = \sum_{k=1}^{d}\sum_{j=1}^d cov_x(\phi_k,\phi_j)\cdot cov_\mathcal{T}(\hat{w}_k^i,\hat{w}_j^i)=\\
    & = \frac{\sigma^2_i}{(n-1)} \begin{bmatrix}
        1 & \dots & 1
    \end{bmatrix} (\mathbf{\Phi}^\intercal\mathbf{\Phi})^{-1}(\mathbf{\Phi}^\intercal\mathbf{\Phi}) \begin{bmatrix}
        1 \\ \dots \\ 1
    \end{bmatrix}  
    =\frac{\sigma^2_i}{(n-1)} d .
\end{align*}

Finally, by combining the two cases, we get:

\begin{align*}
    &var_d^\iota = \sum_{k=1}^{d}\sum_{j=1}^d cov_x(\phi_k,\phi_j)\cdot cov_\mathcal{T}(\hat{w}_k^\iota,\hat{w}_j^\iota)=\\
    & = \frac{\sigma^2_\iota}{(n-1)} \begin{bmatrix}
        1 & \dots & 1
    \end{bmatrix} (\mathbf{\Phi}^\intercal\mathbf{\Phi})^{-1}(\mathbf{\Phi}^\intercal\mathbf{\Phi}) \begin{bmatrix}
        1 \\ \dots \\ 1
    \end{bmatrix}  
    =\frac{\sigma^2_\iota}{(n-1)} d .
\end{align*}
\end{proof}

\section{Analysis of Bias and proof of Theorem \ref{thm:bias}}\label{app:bias}
As in the previous section, to show the bias of the model, we will first focus on a single-task linear regression of the original features, subsequently extending to the general case of an aggregated target $\phi_\iota$ and a reduced set of inputs $\phi_1,\dots,\phi_d$.

\subsection{Linear model of the original features, single-task}

Firstly we estimate the output $y_i$ with a multivariate linear regression on the $D$ original features. Each model is predicted as shown in Equation \ref{eq:singleModelOriginalFeatures}. To compute the bias of this linear model, we will exploit the expression of the expected value of the coefficients that can be found in Lemma \ref{lem:varExpCoeff}. Additionally, we will exploit the following Lemma, that shows the expression of the coefficient o multiple correlation in a linear regression setting.

\begin{lemma}
    Considering the linear regression on the target $y_i$ of the features $x_1,\dots,x_D$, the coefficient of multiple correlation $R^2_{D,i}$ is:
    \begin{align*}
        R^2_{D,i} &= \sum_{k=1}^D \Big\{\frac{\sigma^2_{x_k}}{\sigma^2_{x_k\lvert x^{-k}}}\rho^2_{x_k,y_i} - \rho_{x_k,y_i}\sum_{\substack{h=1 \\ h\neq k}}^{D}\Big( \frac{\rho_{x_h,x_k\lvert x^{-h,k}}}{\sigma_{x_h\lvert x^{-h}}\cdot \sigma_{x_k\lvert x^{-k}}} \Big) \rho_{x_h,y_i}\sigma_{x_h}\sigma_{x_k}\Big\}\\
        & = \frac{1}{\sigma^2_{y_i}} \sum_{k=1}^D \Big\{\frac{cov(x_k,y_i)^2}{\sigma^2_{x_k\lvert x^{-k}}} - cov(x_k,y_i)\sum_{\substack{h=1 \\ h\neq k}}^{D}\Big( \frac{\rho_{x_h,x_k\lvert x^{-h,k}}}{\sigma_{x_h\lvert x^{-h}}\cdot \sigma_{x_k\lvert x^{-k}}} \Big) cov(x_h,y_i)\Big\}.
    \end{align*}
\end{lemma}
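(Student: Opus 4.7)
The plan is to exploit the standard closed-form expression of the coefficient of multiple correlation in ordinary least squares and then substitute the precision-matrix formula already established in Lemma \ref{lem:varExpCoeff}. Specifically, in the asymptotic (population) regression of $y_i$ on $x_1,\dots,x_D$, the optimal coefficient vector satisfies $\mathbf{w}^\star = \Sigma^{-1}\mathbf{c}$, where $\Sigma$ is the feature covariance matrix and $\mathbf{c}$ collects the cross-covariances $c_k = \mathrm{cov}(x_k,y_i)$. The explained variance is $\mathbf{w}^{\star\top}\Sigma\mathbf{w}^\star = \mathbf{c}^\top \Sigma^{-1} \mathbf{c}$, so the coefficient of multiple correlation reduces to $R^2_{D,i} = \mathbf{c}^\top \mathcal{P}_D\mathbf{c}/\sigma^2_{y_i}$, using that $\Sigma^{-1} = \mathcal{P}_D$.

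Next, I would substitute the explicit entries of $\mathcal{P}_D$ read off from Lemma \ref{lem:varExpCoeff}: the $(k,k)$ diagonal entry is $1/\sigma^2_{x_k\mid x^{-k}}$, and the $(h,k)$ off-diagonal entry with $h\neq k$ is $-\rho_{x_h,x_k\mid x^{-h,k}}/(\sigma_{x_h\mid x^{-h}}\sigma_{x_k\mid x^{-k}})$. Expanding the quadratic form $\mathbf{c}^\top \mathcal{P}_D\mathbf{c}$ then gives a diagonal contribution $\sum_{k} c_k^2/\sigma^2_{x_k\mid x^{-k}}$ plus an off-diagonal contribution $-\sum_{k}c_k\sum_{h\neq k} c_h\, \rho_{x_h,x_k\mid x^{-h,k}}/(\sigma_{x_h\mid x^{-h}}\sigma_{x_k\mid x^{-k}})$. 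Dividing by $\sigma^2_{y_i}$ yields the second (covariance) form of the statement directly.

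To obtain the first (correlation) form, I would apply the identity $c_k = \rho_{x_k,y_i}\sigma_{x_k}\sigma_{y_i}$, which lets one factor of $\sigma^2_{y_i}$ cancel with the $1/\sigma^2_{y_i}$ prefactor outside and converts the remaining covariance factors into products of standard deviations and correlations; the diagonal term produces $\sigma^2_{x_k}\rho^2_{x_k,y_i}/\sigma^2_{x_k\mid x^{-k}}$ and the off-diagonal term produces exactly the claimed expression. The only real subtlety is the symmetry bookkeeping of $\mathcal{P}_D$: since the matrix is symmetric, the off-diagonal terms can be written either as $2\sum_{h<k}$ or as $\sum_{h\neq k}$, so one has to be careful not to introduce a spurious factor of $2$ when re-indexing to match the form of the lemma. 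Apart from this, the computation is routine matrix algebra and there is no genuine obstacle beyond the explicit substitution.
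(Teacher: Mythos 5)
Your proposal is correct and follows essentially the same route as the paper: both reduce $R^2_{D,i}$ to a quadratic form in the inverse covariance/correlation matrix of the features and then expand it entrywise using the partial-variance and partial-correlation expressions of the precision matrix $\mathcal{P}_D$ (the paper starts from $\boldsymbol{\rho}^\top corr(\mathbf{X})^{-1}\boldsymbol{\rho}$ and rescales by $\mathrm{diag}(\sigma_{x_1},\dots,\sigma_{x_D})$, which is algebraically identical to your $\mathbf{c}^\top\mathcal{P}_D\mathbf{c}/\sigma^2_{y_i}$ starting point). Your remark about the $\sum_{h\neq k}$ versus $2\sum_{h<k}$ bookkeeping is the right detail to watch, and is handled consistently in the stated result.
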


\begin{proof}
    Considering the definition of coefficient of determination (i.e., squared coefficient of multiple correlation):
    \begin{align*}
        R^2_{D,i} &= \begin{bmatrix}
            \rho_{x_1,y_i} & \dots & \rho_{x_D,y_i} 
        \end{bmatrix}
        corr(\mathbf{X})^{-1}\begin{bmatrix}
            \rho_{x_1,y_i} \\ \dots \\ \rho_{x_D,y_i}
        \end{bmatrix},
    \end{align*}
we need to compute the inverse of the correlation matrix of the features.

Given the inverse of the covariance matrix:

\begin{equation*}
    cov(\mathbf{X})^{-1} = \begin{bmatrix}
        \frac{1}{\sigma^2_{x_1}|x^{-1}} & \frac{\rho_{x_1,x_2\lvert x^{-1,2}}}{\sigma_{x_1\lvert x^{-1}}\cdot \sigma_{x_2\lvert x^{-2}}} & \dots & \frac{\rho_{x_1,x_D\lvert x^{-1,D}}}{\sigma_{x_1\lvert x^{-1}}\cdot \sigma_{x_D\lvert x^{-D}}}\\
        \dots & \dots & \dots & \dots \\
        \frac{\rho_{x_1,x_D\lvert x^{-1,D}}}{\sigma_{x_1\lvert x^{-1}}\cdot \sigma_{x_D\lvert x^{-D}}}  & \frac{\rho_{x_D,x_2\lvert x^{-D,2}}}{\sigma_{x_D\lvert x^{-D}}\cdot \sigma_{x_2\lvert x^{-2}}} & \dots & \frac{1}{\sigma^2_{x_D}|x^{-D}}\\
    \end{bmatrix}
\end{equation*}

decomposing it as follows:

\begin{equation*}
        cov(\mathbf{x}) = \begin{bmatrix}
            \sigma_1 & 0 & \dots & 0\\
            \dots & \dots & \dots & \dots \\
            0 & \dots & 0 & \sigma_D
        \end{bmatrix} 
        \begin{bmatrix}
            1 & \rho_{x_1,x_2} & \dots & \rho_{x_1,x_D}\\
            \dots & \dots & \dots & \dots \\
            \rho_{x_1,x_D} & \rho_{x_2,x_D} & \dots & 1
        \end{bmatrix}
        \begin{bmatrix}
            \sigma_1 & 0 & \dots & 0\\
            \dots & \dots & \dots & \dots \\
            0 & \dots & 0 & \sigma_D
        \end{bmatrix},
\end{equation*}
and recalling that the inverse of a product is the product of the inverses $(ABA)^{-1} = A^{-1}B^{-1}A^{-1}$, we get:

\begin{align*}
        &corr(\mathbf{X})^{-1} = \begin{bmatrix}
            \sigma_1 & 0 & \dots & 0\\
            \dots & \dots & \dots & \dots \\
            0 & \dots & 0 & \sigma_D
        \end{bmatrix}
        cov(\mathbf{X})^{-1}
        \begin{bmatrix}
            \sigma_1 & 0 & \dots & 0\\
            \dots & \dots & \dots & \dots \\
            0 & \dots & 0 & \sigma_D
        \end{bmatrix}
        \\
        &= \begin{bmatrix}
            \sigma_1 & 0 & \dots & 0\\
            \dots & \dots & \dots & \dots \\
            0 & \dots & 0 & \sigma_D
        \end{bmatrix}
        \begin{bmatrix}
        \frac{1}{\sigma^2_{x_1}|x^{-1}}  & \dots & \frac{\rho_{x_1,x_D\lvert x^{-1,D}}}{\sigma_{x_1\lvert x^{-1}}\cdot \sigma_{x_D\lvert x^{-D}}}\\
        \dots & \dots & \dots \\
        \frac{\rho_{x_1,x_D\lvert x^{-1,D}}}{\sigma_{x_1\lvert x^{-1}}\cdot \sigma_{x_D\lvert x^{-D}}}  & \dots & \frac{1}{\sigma^2_{x_D}|x^{-D}}\\
    \end{bmatrix}
        \begin{bmatrix}
            \sigma_1 & 0 & \dots & 0\\
            \dots & \dots & \dots & \dots \\
            0 & \dots & 0 & \sigma_D
        \end{bmatrix}\\
        &= \begin{bmatrix}
        \frac{\sigma^2_{x_1}}{\sigma^2_{x_1}|x^{-1}} & \frac{\sigma_{x_1}\sigma_{x_2}\rho_{x_1,x_2\lvert x^{-1,2}}}{\sigma_{x_1\lvert x^{-1}}\cdot \sigma_{x_2\lvert x^{-2}}} & \dots & \frac{\sigma_{x_1}\sigma_{x_D}\rho_{x_1,x_D\lvert x^{-1,D}}}{\sigma_{x_1\lvert x^{-1}}\cdot \sigma_{x_D\lvert x^{-D}}}\\
        \dots & \dots & \dots & \dots \\
        \frac{\sigma_{x_1}\sigma_{x_D}\rho_{x_1,x_D\lvert x^{-1,D}}}{\sigma_{x_1\lvert x^{-1}}\cdot \sigma_{x_D\lvert x^{-D}}}  & \frac{\sigma_{x_D}\sigma_{x_2}\rho_{x_D,x_2\lvert x^{-D,2}}}{\sigma_{x_D\lvert x^{-D}}\cdot \sigma_{x_2\lvert x^{-2}}} & \dots & \frac{\sigma_{x_D}^2}{\sigma^2_{x_D}|x^{-D}}\\
    \end{bmatrix}.
\end{align*}

Therefore:

\begin{align*}
        & R^2_{D,i} = \begin{bmatrix}
            \rho_{x_1,y_i} & \dots & \rho_{x_D,y_i} 
        \end{bmatrix}
        \begin{bmatrix}
        \frac{\sigma^2_{x_1}}{\sigma^2_{x_1}|x^{-1}}  & \dots & \frac{\sigma_{x_1}\sigma_{x_D}\rho_{x_1,x_D\lvert x^{-1,D}}}{\sigma_{x_1\lvert x^{-1}}\cdot \sigma_{x_D\lvert x^{-D}}}\\
        \dots & \dots & \dots \\
        \frac{\sigma_{x_1}\sigma_{x_D}\rho_{x_1,x_D\lvert x^{-1,D}}}{\sigma_{x_1\lvert x^{-1}}\cdot \sigma_{x_D\lvert x^{-D}}}  & \dots & \frac{\sigma_{x_D}^2}{\sigma^2_{x_D}|x^{-D}}\\
    \end{bmatrix}\begin{bmatrix}
            \rho_{x_1,y_i} \\ \dots \\ \rho_{x_D,y_i}
        \end{bmatrix}\\
    & = \begin{bmatrix}
        {\scriptstyle\frac{\sigma^2_{x_1}}{\sigma^2_{x_1\lvert x^{-1}}}\rho_{x_1,y_i} - \sum_{\substack{h=1 \\ h\neq 1}}^{D}\Big( \frac{\rho_{x_h,x_1\lvert x^{-h,1}}}{\sigma_{x_h\lvert x^{-h}}\cdot \sigma_{x_1\lvert x^{-1}}} \Big) \rho_{x_h,y_i}\sigma_{x_h}\sigma_{x_1}} & \dots 
    \end{bmatrix}
    \begin{bmatrix}
            \rho_{x_1,y_i} \\ \dots \\ \rho_{x_D,y_i}
        \end{bmatrix}\\
    & = \sum_{k=1}^D \Big\{\frac{\sigma^2_{x_k}}{\sigma^2_{x_k\lvert x^{-k}}}\rho^2_{x_k,y_i} - \rho_{x_k,y_i}\sum_{\substack{h=1 \\ h\neq k}}^{D}\Big( \frac{\rho_{x_h,x_k\lvert x^{-h,k}}}{\sigma_{x_h\lvert x^{-h}}\cdot \sigma_{x_k\lvert x^{-k}}} \Big) \rho_{x_h,y_i}\sigma_{x_h}\sigma_{x_k}\Big\}.
\end{align*}
\end{proof}

\begin{remark}
    The same result of the Lemma can be computed by considering the definition of the coefficient of determination in terms of the residual and total sum of squares:
    \begin{equation*}
        R^2_{D,i} = \frac{SS_{reg}}{SS_{tot}} = \frac{\mathbf{\hat{w}}^\intercal X^\intercal \mathbf{y}_i - \frac{1}{N} (\mathbf{1}^\intercal \mathbf{y}_i)^2}{\mathbf{y}_i^\intercal\mathbf{y}_i - \frac{1}{N} (\mathbf{1}^\intercal \mathbf{y}_i)^2}.
    \end{equation*}
\end{remark}

The following theorem finally shows the expression of bias of the model in this linear single-task setting.

\begin{theorem}
        In the asymptotic case, the bias of the linear regression model of Equation \ref{eq:singleModelOriginalFeatures}, trained to estimate the $i$-th task, is related to the characteristics of the features and with the coefficients of the linear regression model as:
    \begin{align*}
        bias_D^i&=var_x(\sum_{k=1}^D\E_{\mathcal{T}}[\hat{w}_k^i]x_k) + \sigma^2_{f_i} - 2 cov(\sum_{k=1}^D\E_{\mathcal{T}}[\hat{w}_k^i]x_k,f_i(x)),
    \end{align*}

    where the expected values of the regression coefficients $\E_{\mathcal{T}}[\hat{w}_k^i]$ can be computed with the results of Lemma \ref{lem:varExpCoeff}.
    
    Furthermore, it is equal to:

    \begin{align*}
    bias_{D}^i = \sigma^2_{f_i} (1-R^2_{D,i}).
    \end{align*}
\end{theorem}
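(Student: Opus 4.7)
The plan is to start from the bias term of the bias-variance decomposition in Equation \ref{eq:BiasVarDec}. For the linear model of Equation \ref{eq:singleModelOriginalFeatures}, linearity of expectation over the training set gives $\bar{\mathcal{M}}_i(\mathbf{x}) = \sum_{k=1}^D \mathbb{E}_{\mathcal{T}}[\hat{w}_k^i]\,x_k$, and since the noise $\epsilon_i$ has mean zero and is independent of the features, the conditional mean of the target reduces to $\bar{y}_i(\mathbf{x}) = f_i(x_1,\dots,x_D)$. Expanding the square $\mathbb{E}_{\mathbf{x}}[(\bar{\mathcal{M}}_i(\mathbf{x}) - f_i(\mathbf{x}))^2]$ and applying the zero-mean conventions on $x_k$ and $f_i$ from the preliminaries rewrites the three resulting $L^2$ inner products as $var_x(\sum_k \mathbb{E}_\mathcal{T}[\hat w_k^i]x_k)$, $\sigma^2_{f_i}$, and $-2\,cov(\sum_k \mathbb{E}_\mathcal{T}[\hat w_k^i]x_k, f_i(x))$, yielding the first displayed identity.

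To collapse this to the compact form $\sigma^2_{f_i}(1-R^2_{D,i})$, I would invoke two asymptotic facts. First, by Lemma \ref{lem:varExpCoeff} the expected coefficient vector is $\mathbb{E}_\mathcal{T}[\hat{\mathbf{w}}^i] = (n-1)^{-1}\mathcal{P}_D \mathbf{X}^\intercal \mathbf{y}_i$, which in the limit converges to $cov(\mathbf{x})^{-1}cov(\mathbf{x},y_i) = cov(\mathbf{x})^{-1}cov(\mathbf{x},f_i)$ (the noise drops out because $\epsilon_i \perp \mathbf{x}$). This is exactly the coefficient vector of the population best linear predictor of $f_i(\mathbf{x})$ from $\mathbf{x}$, so by the $L^2$ orthogonality of the residual with every $x_k$, and hence with the predictor itself, one obtains $cov(\sum_k \mathbb{E}_\mathcal{T}[\hat w_k^i]x_k,\,f_i(\mathbf{x})) = var(\sum_k \mathbb{E}_\mathcal{T}[\hat w_k^i]x_k)$. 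Substituting this identity into the first equality immediately eliminates the cross term and produces $bias_D^i = \sigma^2_{f_i} - var(\sum_k \mathbb{E}_\mathcal{T}[\hat w_k^i]x_k)$.

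The final step is to identify $var(\sum_k \mathbb{E}_\mathcal{T}[\hat w_k^i]x_k) = cov(\mathbf{x},f_i)^\intercal cov(\mathbf{x})^{-1} cov(\mathbf{x},f_i)$ with $\sigma^2_{f_i}\,R^2_{D,i}$, using the precision-matrix expression for $R^2_{D,i}$ supplied in the preceding lemma. I expect this to be the main obstacle: the lemma presents $R^2_{D,i}$ as a double sum over partial variances and partial correlations weighted by simple correlations $\rho_{x_k,y_i}$, while the variance naturally arises as a quadratic form against the precision matrix $\mathcal{P}_D$. Matching the two representations requires the same element-wise rewriting of $\mathcal{P}_D$ performed in Lemma \ref{lem:varExpCoeff}, sandwiched between the two copies of $cov(\mathbf{x},y_i)$; once each summand is paired up, substituting $cov(x_k,y_i) = \rho_{x_k,y_i}\sigma_{x_k}\sigma_{y_i}$ and factoring out $\sigma^2_{f_i}$ (via $cov(\mathbf{x},y_i)=cov(\mathbf{x},f_i)$) recovers exactly the lemma's expression, closing the proof.
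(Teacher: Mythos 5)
Your proposal is correct and follows essentially the same route as the paper: expand the squared bias into variance, $\sigma^2_{f_i}$, and a cross-covariance term, then use the population least-squares orthogonality (the paper phrases it as the vanishing partial covariance $cov(x_k,f_i\mid\mathbf{X})=0$) to equate the cross term with the predictor's variance, and finally identify that quadratic form with $\sigma^2_{f_i}R^2_{D,i}$ via the precision-matrix expression of the preceding lemma. The only difference is the order in which the cross term and the variance term are matched to $\sigma^2_{f_i}R^2_{D,i}$, which is immaterial.
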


\begin{proof}
    The first expression can be obtained by definition of bias of the model and exploiting the hypothesis of null expected values of $f_i$ and $x_1,\dots,x_D$, as:
    \begin{align*}
    bias_D^i&=\E_{x}[(\sum_{k=1}^D\E_{\mathcal{T}}[\hat{w}_k^i]x_k-f_i(x))^2] = var_x(\sum_{k=1}^D\E_{\mathcal{T}}[\hat{w}_k^i]x_k-f_i(x))\\
    &= var_x(\sum_{k=1}^D\E_{\mathcal{T}}[\hat{w}_k^i]x_k) + var_x(f_i(x)) - 2 cov(\sum_{k=1}^D\E_{\mathcal{T}}[\hat{w}_k^i]x_k,f_i(x)).
\end{align*}

This already shows that the bias depends on the variance of the predicted and the real model, decreased by how much the real and the predicted model are related (through covariance). However, by exploiting the coefficient of multiple correlation and the partial correlation, we can refine this expression to estimate it directly from the data. 

Exploiting the expression of the expected value of the coefficients, the last term of the equation above can be written as:

\begin{align*}
    &- 2 cov(\sum_{k=1}^D\E_{\mathcal{T}}[\hat{w}_k^i]x_k,f_i(x)) = -2 \sum_{k=1}^Dcov(x_k,f_i)\E_{\mathcal{T}}[\hat{w}_k^i]\\
    & = -2 \sum_{k=1}^D \Big\{\frac{cov(x_k,f_i)^2}{\sigma^2_{x_k\lvert x^{-k}}} - \sum_{\substack{h=1 \\ h\neq k}}^{D}\Big( \frac{\rho_{x_h,x_k\lvert x^{-h,k}}}{\sigma_{x_h\lvert x^{-h}}\cdot \sigma_{x_k\lvert x^{-k}}} \Big) cov(x_h,f_i)cov(x_k,f_i)\Big\} \\
    &= -2\sigma^2_{f_i}R^2_{D,i}.
\end{align*}

The bias is, therefore, equal to:
\begin{align*}
    bias_{D}^i = var_x(\sum_{k=1}^D\E_{\mathcal{T}}[\hat{w}_k^i]x_k) + \sigma^2_{f_i} -2\sigma^2_{f_i}R^2_{D,i}.
\end{align*}

We finally characterize the first term, i.e. the variance of the linear estimator, concluding the proof of the theorem. 

\begin{lemma}
    The following equality holds:
\begin{equation*}
    \sigma^2_{f_i}R^2_{D,i} = var_x(\sum_{k=1}^D\E_{\mathcal{T}}[\hat{w}_k^i]x_k).
\end{equation*}
\end{lemma}
\begin{proof}
Exploiting the properties of variance:
\begin{align*}
    var_x(\sum_{k=1}^D\E_{\mathcal{T}}[\hat{w}_k^i]x_k) &= \sum_{k=1}^D\sum_{j=1}^D cov(x_k,x_j) \E_{\mathcal{T}}[\hat{w}_k^i]\E_{\mathcal{T}}[\hat{w}_j^i]\\
    &= \sum_{k=1}^D\E_{\mathcal{T}}[\hat{w}_k^i]\Big(\sum_{j=1}^D cov(x_k,x_j) \E_{\mathcal{T}}[\hat{w}_j^i]\Big),
\end{align*}
and recalling that, from above:
\begin{align*}
    \sigma^2_{f_i}R^2_{D,i} = \sum_{k=1}^Dcov(x_k,f_i)\E_{\mathcal{T}}[\hat{w}_k^i],
\end{align*}

we need to show that, for each $k\in \{1,\dots,D\}$, the following equality holds:
\begin{equation*}
    \sum_{j=1}^D cov(x_k,x_j) \E_{\mathcal{T}}[\hat{w}_j^i] = cov(x_k,f_i).
\end{equation*}

This equality can be shown by exploiting the definition of partial covariance:

\begin{equation*}
    0 = cov(x_k,f_i|\mathbf{X}) = cov(x_k,f_i) - cov(x_k,\mathbf{X})cov(\mathbf{X})^{-1}cov(\mathbf{X},f_i)
\end{equation*}

where the first equality holds because the residual of the regression of $\mathbf{X}$ on $x_k$ is zero since $x_k$ is among the regressors, and the second equality holds by definition of partial covariance. Rewriting the equality that we want to show as: 

\begin{align*}
    cov(x_k,f_i) = \sum_{j=1}^D cov(x_k,x_j) \E_{\mathcal{T}}[\hat{w}_j^i] = cov(x_k,\mathbf{X}) \E_{\mathcal{T}}[\hat{\mathbf{w}}^i] \\=cov(x_k,\mathbf{X}) (\mathbf{X}^\intercal \mathbf{X})^{-1} \mathbf{X}^\intercal \mathbf{f_i} = cov(x_k,\mathbf{X}) cov(\mathbf{X})^{-1} cov(\mathbf{X},f_i),
\end{align*}
proves the statement of the lemma and concludes the proof of the theorem.
\end{proof}
\end{proof}

\begin{remark}
    The result is confirmed in the two- and three-dimensional case and in the case of $D$ mutually independent features, where all the quantities can be explicitly computed.
\end{remark}

\subsection{Extension to transformed features and aggregated targets}
As done for the variance, in this section, we elaborate on the results of the previous subsection and extend them. 

Firstly, given the original targets $y_1\dots y_L$, we apply a multivariate linear regression after having aggregated clusters of targets, resulting in $l$ aggregated targets $\psi_1\dots\psi_l$. We still consider the $D$ original features. Focusing again on the $i$-th task, defining $\bar{f}_\iota(x) := \frac{1}{K}\sum_{k:y_k \in \mathcal{P}_\iota} f_k(x)$, the expected values of coefficients are:

\begin{align*}
\E_{\mathcal{T}}[\hat{w}^\iota_D\lvert\mathbf{X}] = 
\begin{bmatrix}
    \frac{cov(x_1,\bar{f}_\iota(x))}{\sigma^2_{x_1\lvert x^{-1}}} - \sum_{\substack{k=1 \\ k\neq 1}}^{D}\Big( \frac{\rho_{x_1,x_k\lvert x^{-1,k}}}{\sigma_{x_1\lvert x^{-1}}\cdot \sigma_{x_k\lvert x^{-k}}} \Big) cov(x_k,\bar{f}_\iota(x))\\
    \dots \\
    \frac{cov(x_D,\bar{f}_\iota(x))}{\sigma^2_{x_D\lvert x^{-D}}} - \sum_{\substack{k=1 \\ k\neq D}}^{D}\Big( \frac{\rho_{x_D,x_k\lvert x^{-D,k}}}{\sigma_{x_D\lvert x^{-D}}\cdot \sigma_{x_k\lvert x^{-k}}} \Big) cov(x_k,\bar{f}_\iota(x))
\end{bmatrix}.
\end{align*}

\begin{theorem}\label{thm:generalBias}
    In the asymptotic case, estimating the $i$-th task with the average $\psi_\iota$ of a set of tasks that contains $y_i$, the bias of the model is:
    \begin{equation}\label{eq:biasAll}
\begin{split}
    bias_d^\iota &= \sigma^2_{f_i} + \sigma^2_{\psi_\iota}R^2_{d,\iota}  -2 ( cov(f_i,\psi_\iota) -  cov(f_i,\psi_\iota | \mathbf{\Phi}) )\\
    & = \sigma^2_{f_i} - \sigma^2_{\psi_\iota}R^2_{d,\iota}  +2 (cov(\psi_\iota,f_i - \psi_\iota | \mathbf{\Phi}) - cov(\psi_\iota,f_i - \psi_\iota)).
\end{split}
\end{equation}
\end{theorem}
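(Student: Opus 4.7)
The plan is to work from the bias--variance decomposition of the MSE (Equation \ref{eq:BiasVarDec}) applied to the model $\hat{\psi}_\iota=\hat{w}_1^\iota\phi_1+\dots+\hat{w}_d^\iota\phi_d$ trained on target $\psi_\iota$, and evaluate the bias component against the true regression function $f_i$ of task $\mathcal{T}_i$. Since estimators converge in probability to the quantities they estimate, in the asymptotic regime $\bar{\mathcal{M}}(\mathbf{x})=\mathbb{E}_{\mathcal{T}}[\hat{\psi}_\iota\mid\mathbf{\Phi}]$ coincides with the population linear projection of $\psi_\iota$ onto the span of $\mathbf{\Phi}=(\phi_1,\dots,\phi_d)$, i.e.\ $\bar{\mathcal{M}}(\mathbf{x}) = \mathbf{\phi}^\intercal\,\mathrm{cov}(\mathbf{\Phi})^{-1}\mathrm{cov}(\mathbf{\Phi},\psi_\iota)$. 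Using the zero-mean assumption on features and targets, the bias squared reduces to a variance:
\begin{equation*}
bias_d^\iota = \mathbb{E}_{\mathbf{x}}\bigl[(\bar{\mathcal{M}}(\mathbf{x})-f_i(\mathbf{x}))^2\bigr] = \mathrm{var}(\bar{\mathcal{M}}) + \sigma^2_{f_i} - 2\,\mathrm{cov}(\bar{\mathcal{M}},f_i).
\end{equation*}

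Next, I would identify each of the three terms. The middle term $\sigma^2_{f_i}$ is immediate. For $\mathrm{var}(\bar{\mathcal{M}})$, since $\bar{\mathcal{M}}$ is the population projection of $\psi_\iota$ onto $\mathbf{\Phi}$, its variance equals the explained variance in the classical $R^2$ identity, giving $\mathrm{var}(\bar{\mathcal{M}})=\sigma^2_{\psi_\iota}R^2_{d,\iota}$. This mirrors the intermediate lemma used in the single-task bias proof in the preceding subsection, and can be obtained either by invoking the Lemma on the coefficient of multiple correlation, or by expanding $\bar{\mathcal{M}}=\mathbf{\phi}^\intercal\mathrm{cov}(\mathbf{\Phi})^{-1}\mathrm{cov}(\mathbf{\Phi},\psi_\iota)$ directly.

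The main obstacle is the cross-term $\mathrm{cov}(\bar{\mathcal{M}},f_i)$, which I would handle via partial covariance. Writing $\psi_\iota = \bar{\mathcal{M}} + r$, where $r$ is the regression residual orthogonal to every $\phi_j$, I would use $\mathrm{cov}(r,\hat{f}_i)=0$ for the linear projection $\hat{f}_i$ of $f_i$ onto $\mathbf{\Phi}$, so that $\mathrm{cov}(r,f_i)=\mathrm{cov}(r,f_i-\hat{f}_i)=\mathrm{cov}(\psi_\iota,f_i\mid\mathbf{\Phi})$ by the definition of partial covariance. This yields the key identity
\begin{equation*}
\mathrm{cov}(\bar{\mathcal{M}},f_i) = \mathrm{cov}(\psi_\iota,f_i) - \mathrm{cov}(\psi_\iota,f_i\mid\mathbf{\Phi}),
\end{equation*}
and substituting into the decomposition produces the first form of Equation \ref{eq:biasAll}.

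Finally, to derive the second (equivalent) form, I would add and subtract $\sigma^2_{\psi_\iota}$ and use the residual-variance identity $\mathrm{var}(\psi_\iota\mid\mathbf{\Phi})=\sigma^2_{\psi_\iota}(1-R^2_{d,\iota})$, together with bilinearity of covariance and partial covariance, to rewrite $-\mathrm{cov}(\psi_\iota,f_i)+\mathrm{cov}(\psi_\iota,f_i\mid\mathbf{\Phi})$ as $\sigma^2_{\psi_\iota}R^2_{d,\iota} + \mathrm{cov}(\psi_\iota,f_i-\psi_\iota\mid\mathbf{\Phi}) - \mathrm{cov}(\psi_\iota,f_i-\psi_\iota)$, up to the sign flip that absorbs the $\sigma^2_{\psi_\iota}R^2_{d,\iota}$ term. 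Corollary \ref{cor:biasSingle} then falls out by setting $\psi_\iota=y_i$ so that $f_i-\psi_\iota=-\epsilon_i$ is independent of $\mathbf{\Phi}$, killing both covariance terms and leaving $\sigma^2_{f_i}(1-R^2_{d,i})$.
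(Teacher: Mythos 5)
Your proposal is correct and follows essentially the same route as the paper: both decompose the bias (using the zero-mean assumption) into $\mathrm{var}(\bar{\mathcal{M}})+\sigma^2_{f_i}-2\,\mathrm{cov}(\bar{\mathcal{M}},f_i)$, identify $\mathrm{var}(\bar{\mathcal{M}})=\sigma^2_{\psi_\iota}R^2_{d,\iota}$ via the multiple-correlation lemma, and rewrite the cross-term as $\mathrm{cov}(f_i,\psi_\iota)-\mathrm{cov}(f_i,\psi_\iota\mid\mathbf{\Phi})$ through the partial-covariance identity $\mathrm{cov}(f_i,\mathbf{\Phi})\,\mathrm{cov}(\mathbf{\Phi})^{-1}\mathrm{cov}(\mathbf{\Phi},\psi_\iota)$. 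The only cosmetic difference is that you phrase the argument in terms of population projections and residual orthogonality, whereas the paper expands the expected coefficients explicitly through the precision matrix and partial correlations.
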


\begin{proof}
    In this case, the bias w.r.t. the $i$-th task is:
\begin{align*}
    \E_{x}&[(\sum_{k=1}^D\E_{\mathcal{T}}[\hat{w}_k^\iota]x_k-f_i(x))^2] = var_x(\sum_{k=1}^D\E_{\mathcal{T}}[\hat{w}_k^\iota]x_k-f_i(x))\\
    &= var_x(\sum_{k=1}^D\E_{\mathcal{T}}[\hat{w}_k^\iota]x_k) + var_x(f_i(x)) - 2 cov(\sum_{k=1}^D\E_{\mathcal{T}}[\hat{w}_k^\iota]x_k,f_i(x)).
\end{align*}

Considering the linear regression on the target $\psi_\iota$ of the features $x_1,\dots,x_D$ the coefficient of multiple correlation $R^2_{D,\iota}$ is:
    \begin{align*}
        R^2_{D,\iota} &= \sum_{k=1}^D \Big\{\frac{\sigma^2_{x_k}}{\sigma^2_{x_k\lvert x^{-k}}}\rho^2_{x_k,\psi_\iota} - \rho_{x_k,\psi_\iota}\sum_{\substack{h=1 \\ h\neq k}}^{D}\Big( \frac{\rho_{x_h,x_k\lvert x^{-h,k}}}{\sigma_{x_h\lvert x^{-h}}\cdot \sigma_{x_k\lvert x^{-k}}} \Big) \rho_{x_h,\psi_\iota}\sigma_{x_h}\sigma_{x_k}\Big\}\\
        & = \frac{1}{\sigma^2_{\psi_\iota}} \sum_{k=1}^D \Big\{\frac{cov(x_k,\psi_\iota)^2}{\sigma^2_{x_k\lvert x^{-k}}} - cov(x_k,\psi_\iota)\sum_{\substack{h=1 \\ h\neq k}}^{D}\Big( \frac{\rho_{x_h,x_k\lvert x^{-h,k}}}{\sigma_{x_h\lvert x^{-h}}\cdot \sigma_{x_k\lvert x^{-k}}} \Big) cov(x_h,\psi_\iota)\Big\}.
    \end{align*}

Recalling the expression of the expected value of each coefficient:

\begin{equation*}
\E_{\mathcal{T}}[\hat{w}^\iota_k\lvert\mathbf{X}] = 
    \frac{cov(x_k,\psi_\iota)}{\sigma^2_{x_k\lvert x^{-k}}} - \sum_{\substack{h=1 \\ h\neq k}}^{D}\Big( \frac{\rho_{x_h,x_k\lvert x^{-h,k}}}{\sigma_{x_h\lvert x^{-h}}\cdot \sigma_{x_k\lvert x^{-k}}} \Big) cov(x_h,\psi_\iota),
\end{equation*}

the bias can be rewritten as:

\begin{equation*}
    bias_D^\iota = \sigma^2_{\psi_\iota}R^2_{D,\iota} + \sigma^2_{f_i} - 2 cov(\sum_{k=1}^D\E_{\mathcal{T}}[\hat{w}_k^\iota]x_k,f_i(x)).
\end{equation*}

It remains to explicit the last term: 
\begin{align*}
    &- 2 cov(\sum_{k=1}^D\E_{\mathcal{T}}[\hat{w}_k^\iota]x_k,f_i(x)) = -2 \sum_{k=1}^Dcov(x_k,f_i)\E_{\mathcal{T}}[\hat{w}_k^\iota]\\
    & = -2 \sum_{k=1}^D \Big\{\frac{cov(x_k,f_i)cov(x_k,\psi_\iota)}{\sigma^2_{x_k\lvert x^{-k}}} - cov(x_k,f_i)\sum_{\substack{h=1 \\ h\neq k}}^{D}\Big( \frac{\rho_{x_h,x_k\lvert x^{-h,k}}}{\sigma_{x_h\lvert x^{-h}}\cdot \sigma_{x_k\lvert x^{-k}}} \Big) cov(x_h,\psi_\iota)\Big\} \\
    &= -2cov(f_i,\mathbf{X})cov(\mathbf{X},\mathbf{X})^{-1}cov(\mathbf{X},\psi_\iota) \\
    &= -2 ( cov(f_i,\psi_\iota) -  cov(f_i,\psi_\iota | \mathbf{X}) ),
\end{align*}

which concludes the proof.

\end{proof}

The single-task and multi-task settings with transformed inputs $\phi_1,\dots,\phi_d$ can be proved with the same passages used in this section to prove the results related to the regression with original features. Therefore, the following theorem concludes the overview of results related to biases, \textbf{proving Theorem \ref{thm:bias} of the main paper}.

\begin{theorem}
    In the asymptotic case, estimating the $i$-th task in a single-task setting, with a set of transformed features $\phi_1\dots\phi_d$, the bias of the model is:

\begin{equation*}
    bias_{d}^i = \sigma^2_{f_i} (1-R^2_{d,i}).
\end{equation*}

Finally, combining the aggregation of targets and the transformation of features, we get the expression reported in Theorem \ref{thm:bias} in the main paper:

\begin{equation}
\begin{split}
    bias_d^\iota &= \sigma^2_{f_i} + \sigma^2_{\psi_\iota}R^2_{d,\iota}  -2 ( cov(f_i,\psi_\iota) -  cov(f_i,\psi_\iota | \mathbf{\Phi}) )\\
    & = \sigma^2_{f_i} - \sigma^2_{\psi_\iota}R^2_{d,\iota}  +2 (cov(\psi_\iota,f_i - \psi_\iota | \mathbf{\Phi}) - cov(\psi_\iota,f_i - \psi_\iota)).
\end{split}
\end{equation}

\end{theorem}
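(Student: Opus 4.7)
The plan is to mirror the proof strategy of Theorem \ref{thm:generalBias} almost verbatim, with the original feature vector $\mathbf{X}$ replaced everywhere by the transformed feature vector $\mathbf{\Phi}=(\phi_1,\dots,\phi_d)$. The single-task case $bias_d^i=\sigma_{f_i}^2(1-R^2_{d,i})$ is immediate by setting $\psi_\iota=y_i$ (so $\psi_\iota=f_i+\epsilon_i$ and the covariance/partial-covariance terms collapse, since $cov(f_i,\epsilon_i)=0$ and $cov(f_i,\epsilon_i\mid\mathbf{\Phi})=0$ by independence of noise and features), which matches Corollary \ref{cor:biasSingle}. Therefore the real work is the general formula; once established, the single-task version drops out.

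First, I would start from the definition of bias and expand the square, obtaining
\begin{equation*}
bias_d^\iota \;=\; var_x\!\Big(\sum_{k=1}^d \E_{\mathcal{T}}[\hat{w}_k^\iota]\,\phi_k\Big)\;+\;\sigma^2_{f_i}\;-\;2\,cov\!\Big(\sum_{k=1}^d \E_{\mathcal{T}}[\hat{w}_k^\iota]\,\phi_k,\, f_i(x)\Big),
\end{equation*}
where the zero-mean assumption lets us identify $\E_x[(\cdot)^2]$ with $var_x(\cdot)$. Next I would compute the expected coefficients using the same OLS identity as in Lemma \ref{lem:varExpCoeff}, but applied to the Gram matrix of $\mathbf{\Phi}$: asymptotically $\E_{\mathcal{T}}[\hat{\mathbf{w}}^\iota\mid\mathbf{\Phi}]=cov(\mathbf{\Phi})^{-1}cov(\mathbf{\Phi},\psi_\iota)$. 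Plugging this in, the quadratic term produces $\sigma^2_{\psi_\iota}R^2_{d,\iota}$ by the definition of the coefficient of multiple correlation applied to the regression of $\psi_\iota$ on $\mathbf{\Phi}$, in direct analogy with the lemma computing $R^2_{D,\iota}$.

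The covariance term requires the key algebraic step: writing
\begin{equation*}
cov\!\Big(\sum_{k=1}^d \E_{\mathcal{T}}[\hat{w}_k^\iota]\,\phi_k,\, f_i\Big)\;=\;cov(f_i,\mathbf{\Phi})\,cov(\mathbf{\Phi})^{-1}\,cov(\mathbf{\Phi},\psi_\iota),
\end{equation*}
and then invoking the partial covariance identity
\begin{equation*}
cov(f_i,\psi_\iota\mid\mathbf{\Phi})\;=\;cov(f_i,\psi_\iota)\;-\;cov(f_i,\mathbf{\Phi})\,cov(\mathbf{\Phi})^{-1}\,cov(\mathbf{\Phi},\psi_\iota)
\end{equation*}
to recognize the inner product as $cov(f_i,\psi_\iota)-cov(f_i,\psi_\iota\mid\mathbf{\Phi})$. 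This yields the first line of the target equation. The second line is pure rearrangement: subtract and add $\sigma^2_{\psi_\iota}$ inside each covariance via $cov(\psi_\iota,f_i)=\sigma^2_{\psi_\iota}+cov(\psi_\iota,f_i-\psi_\iota)$ and the analogous identity for the partial covariance (noting $var(\psi_\iota\mid\mathbf{\Phi})=\sigma^2_{\psi_\iota}(1-R^2_{d,\iota})$), so that $2\sigma^2_{\psi_\iota}R^2_{d,\iota}$ changes sign and the covariance terms switch to differences $f_i-\psi_\iota$.

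The main obstacle I anticipate is the bookkeeping in the last rearrangement: correctly tracking the sign on $\sigma^2_{\psi_\iota}R^2_{d,\iota}$ when passing between the two equivalent forms, and justifying that $var(\psi_\iota\mid\mathbf{\Phi})=\sigma^2_{\psi_\iota}-\sigma^2_{\psi_\iota}R^2_{d,\iota}$ in the asymptotic regime (which follows from the standard decomposition of the target variance into explained and residual parts under OLS). Everything else is a direct transcription of the proof already given for the case of original features, since the algebra only uses $\mathbf{\Phi}$ through its second-order statistics and does not depend on whether the regressors are original or transformed.
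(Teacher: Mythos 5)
Your proposal is correct and follows essentially the same route as the paper: the paper's own proof simply says the result "follows with the same argumentations" as the original-feature case, and your transcription with $\mathbf{X}$ replaced by $\mathbf{\Phi}$ — bias expansion, asymptotic OLS coefficients $cov(\mathbf{\Phi})^{-1}cov(\mathbf{\Phi},\psi_\iota)$, the quadratic term yielding $\sigma^2_{\psi_\iota}R^2_{d,\iota}$, and the partial-covariance identity for the cross term — is exactly that argument. Your explicit justification of the passage between the two equivalent forms via $var(\psi_\iota\mid\mathbf{\Phi})=\sigma^2_{\psi_\iota}(1-R^2_{d,\iota})$ is a detail the paper leaves implicit, and it checks out.
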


\begin{proof}
    In the first case, with the same considerations of the previous subsection, we get:

\begin{align*}
    bias_d^i &= var_x(\sum_{k=1}^d\E_{\mathcal{T}}[\hat{w}_k^i]\phi_k) + var_x(f_i(x)) - 2 cov(\sum_{k=1}^d\E_{\mathcal{T}}[\hat{w}_k^i]\phi_k,f_i(x))\\
    &=var_x(\sum_{k=1}^d\E_{\mathcal{T}}[\hat{w}_k^i]\phi_k) + var_x(f_i(x)) - 2 cov(\sum_{k=1}^d\E_{\mathcal{T}}[\hat{w}_k^i]\phi_k,f_i(x))\\
    &= var_x(\sum_{k=1}^d\E_{\mathcal{T}}[\hat{w}_k^i]\phi_k) + \sigma^2_{f_i} -2\sigma^2_{f_i}R^2_{d,i}\\
    &= \sigma^2_{f_i} (1-R^2_{d,i}).
\end{align*}

Similarly, in the second case the theorem follows with the same argumentations of Theorem \label{thm:generalBias}, proving also Theorem \ref{thm:bias} in the main paper.
\end{proof}

\section{Additional results: bias-variance comparisons between models}

\textbf{Single vs. Multi task model}
Firstly, focusing on the comparison between the two settings with the original set of $D$ features and comparing the case with a single-task with a multi-task model that learns on an aggregated target that is the average of a cluster of features $\psi_\iota$, the asymptotic decrease of variance when the target is aggregated is equal to: 
\begin{align*}
    \Delta var_D^{i-\iota} &:= var_D^{i}-var_D^{\iota} \\
    & = \frac{\sigma^2_i - \bar{\sigma}_\iota^2}{(n-1)}\cdot\Big\{ \sum_{k=1}^D\frac{\sigma^2_{x_k}}{\sigma^2_{x_k\lvert x^{-k}}} - 2 \sum_{k=1}^{D-1}\sum_{j=k+1}^D \frac{cov(x_k,x_j)\cdot \rho_{x_k,x_j\lvert x^{-k,j}}}{\sigma_{x_k\lvert x^{-k}}\cdot \sigma_{x_j\lvert x^{-j}}} \Big\} \\ 
    &= \frac{\sigma^2_i - \bar{\sigma}_\iota^2}{(n-1)} \cdot \mathrm{D} \overset{\sigma^2_i=\sigma^2_k \forall k\in \mathcal{P}_\iota}{=} (\frac{K-1}{K}) \cdot \frac{\sigma^2_i(1-\bar{\rho}_i)}{(n-1)} \mathrm{D},
\end{align*}

Where $\bar{\rho}_i:=\frac{1}{\frac{K(K-1)}{2}}\cdot \sum_{h\neq k}\rho_{h,k}$ is the average correlation among noises. As discussed in the main paper, remembering that by non-negativity of variance $\bar{\rho}_i\geq \frac{1}{1-K}$, no variance gain corresponds to $\bar{\rho}_i=1$, while the maximum one corresponds to the minimum average correlation $\bar{\rho}_i=\frac{1}{1-K}$. The gain is satisfactory when the noises are independent, $\bar{\rho}_i=0$, leading to $\frac{K-1}{K} \cdot \frac{\sigma^2_i}{(n-1)}\mathrm{D}$. This is in line with the intuition that the learning problem is simpler when the target is an average of less correlated tasks. However, this is balanced by the asymptotic difference of biases, which increases if we aggregate tasks that are less:
\begin{align*}
    \Delta bias_D^{\iota-i} &= \sigma^2_{f_i} - \sigma^2_{\psi_\iota}R^2_{D,\iota}  -2 (cov(\psi_\iota,f_i - \psi_\iota) - cov(\psi_\iota,f_i - \psi_\iota | \mathbf{X})) - \\& \sigma^2_{f_i} + \sigma^2_{f_i} R^2_{D,i}\\
    & = \sigma^2_{f_i} R^2_{D,i} - \sigma^2_{\psi_\iota}R^2_{D,\iota} -2 (cov(\psi_\iota,f_i - \psi_\iota) - cov(\psi_\iota,f_i - \psi_\iota | \mathbf{X}))\\
    & = \sigma^2_{\psi_\iota}R^2_{D,\iota} + \sigma^2_{f_i} R^2_{D,i} -2 ( cov(f_i,\psi_\iota) -  cov(f_i,\psi_\iota | \mathbf{X}) )
\end{align*}

\textbf{Full vs. Aggregated features}
Focusing on the single-task case for simplicity, we can inspect the effect in terms of bias and variance due to the transformation of features from the $D$ original ones to a set of $d<D$ basis functions. The results are valid for general zero-mean transformation of features, although the main interest of our approach is in the aggregation with the mean of $d$ groups of features that form a partition of the original ones. 

The asymptotic decrease of variance in this single-task setting is therefore:

\begin{align*}
    \Delta var_{D-d}^{i} &= \frac{\sigma^2_i}{(n-1)}\cdot\Big\{ \Big[\sum_{k=1}^D\frac{\sigma^2_{x_k}}{\sigma^2_{x_k\lvert x^{-k}}} - \sum_{k=1}^d\frac{\sigma^2_{\phi_k}}{\sigma^2_{\phi_k\lvert \phi^{-k}}}\Big] \\ 
    & - 2\Big[\sum_{k=1}^{D-1}\sum_{j=k+1}^D \frac{cov(x_k,x_j)\cdot \rho_{x_k,x_j\lvert x^{-k,j}}}{\sigma_{x_k\lvert x^{-k}}\cdot \sigma_{x_j\lvert x^{-j}}} \\
    & - \sum_{k=1}^{d-1}\sum_{j=k+1}^d \frac{cov(\phi_k,\phi_j)\cdot \rho_{\phi_k,\phi_j\lvert \phi^{-k,j}}}{\sigma_{\phi_k\lvert \phi^{-k}}\cdot \sigma_{\phi_j\lvert \phi^{-j}}} \Big]\Big\}\\
    &= \frac{\sigma^2_i}{(n-1)}\cdot (D-d).
\end{align*}

On the other hand, the increase of bias is equal to:

\begin{equation*}
    \Delta bias_{d-D}^{i} = \sigma^2_{f_i} (1-R^2_{d,i}) - \sigma^2_{f_i} (1-R^2_{D,i}) = \sigma^2_{f_i} (R^2_{D,i}-R^2_{d,i}).
\end{equation*}

From the variance comparisons, we clearly see that the asymptotic advantage of dimensionality reduction resides in a reduced number of coefficients that need to be estimated. Additionally, the bias shows that it is profitable to reduce features as long as the amount of information shared with the target, measured in terms of the (squared) coefficient of multiple correlation, remains satisfactory.

\section{Algorithm}\label{app:algo}

Algorithm \ref{alg:AggregationLoop} reports the pseudo-code of the auxiliary aggregation loop considered both for the target and the task aggregation phase. In particular, in the first phase, it receives as input the full set of targets, the set of features, the hyperparameter, and an indication of the phase. The loop iterates over targets, forming the partition through the exploitation of the threshold function defined in Algorithm \ref{alg:nonLinMTL}. Similarly, in the second phase, the algorithm receives the current aggregated target and the full set of features, iteratively identifying the feature aggregations exploiting the threshold function for features defined in Algorithm \ref{alg:nonLinMTL}.

\begin{algorithm}[h]
\caption{Auxiliary loop for the iterative aggregation of a set of variables with the mean, used both for targets and features, holds for generic random vector.}\label{alg:AggregationLoop}
\begin{algorithmic}
\Require{Input variables $\bm{z}=\{z_1,\dots,z_{\lvert\bm{z}\rvert}\}$; $D$ input features $\bm{x}=\{x_1,\dots,x_D\}$ if we are aggregating targets, one aggregated target $y=\psi_\iota$ if we are aggregating features for an aggregated task in the second phase, flag \emph{phase} identifies it; tolerance $\epsilon$.}
\Ensure{reduced variables (tasks or features): $\{z_{\mathcal{P}_1},\dots,z_{\mathcal{P}_K}\}$}\\ 
\vspace{-0,3cm}
\Function{\textsc{Aggregation}$(\bm{z},phase, \epsilon,\bm{x},y)$}{}\\
\Comment{Iterative aggregation of sets of variables with the mean, used both for targets and features, holds for generic random vector $\bm{z}$}
            \State{$\bm{\mathcal{P}} \leftarrow \{\}$}
\State{$\mathcal{V} \leftarrow \{\}$} \Comment{Set of already considered elements}
\ForEach {$i \in \{ 1,\dots,\lvert\bm{z}\rvert\}$}
\If{$i\not \in \mathcal{V}$}
    \State $\mathcal{P} \leftarrow \{i\}$
    \State $ \mathcal{V} \leftarrow  \mathcal{V} \cup \{i\}$  
    \ForEach{$j \in \{ i+1,\dots,\lvert\bm{z}\rvert\}$}
    \State $z_\mathcal{P} \leftarrow mean(z_k:k\in\mathcal{P})$ \Comment{Mean of elements in current set}
    \If{$phase==1$}\Comment{First phase, we are aggregating targets}
    \State $threshold \leftarrow$ \textsc{Compute\_threshold\_targets}$( \bm{x},z_\mathcal{P},z_j,\epsilon)$
    \Else 
    \State $\bm{x}_{curr} \leftarrow \{x_k: k\notin \mathcal{V}\} \cup \{mean(x_h:h\in\mathcal{P}_k), k\in \{1,\dots,\lvert\bm{\mathcal{P}}\rvert\}\}$ 
    \State $threshold \leftarrow$ \textsc{Compute\_threshold\_features}$( \bm{x}_{curr},y,z_\mathcal{P},z_j,\epsilon)$
    \EndIf
    \If{$threshold==True$} \Comment{Add $j$-th element to current set}
    \State{$\mathcal{P} \leftarrow \mathcal{P}  \cup \{j\}$}
    \State{$\mathcal{V} \leftarrow  \mathcal{V} \cup \{j\}$  }
    \EndIf
    \EndFor
    \State{$\bm{\mathcal{P}} \leftarrow \bm{\mathcal{P}} \cup \{\mathcal{P}\}$}
\EndIf
\EndFor   
\State{$K \gets \lvert\bm{\mathcal{P}}\rvert$}
\ForEach{$k \in \{ 1,\dots,K\}$}
\State{$z_{\mathcal{P}_k} = mean(z_h:h\in\mathcal{P}_k)$}
\EndFor\\
\Return{$\{z_{\mathcal{P}_1},\dots,z_{\mathcal{P}_K}\}$}
\EndFunction
\end{algorithmic}
\end{algorithm}

\section{Experimental Validation}
\subsection{Synthetic Experiments}\label{app:synth}
This section includes additional results related to subsection \ref{sub:synthExp} of the main paper, where synthetic experiments are introduced. Considering the configuration described in the main paper, where the effect of varying different parameters has been described, Figure \ref{fig:synthMSE2} shows the test performances varying the hyperparameters of the \algnameshortMain approach. From the first column of the figure, we can see that $\epsilon_1$ regulates the propensity to aggregate targets, with a value close to zero that balances the aggregations, large negative value that tends to produce a singleton with all tasks averaged (which becomes detrimental for the final performance on the original tasks when too many aggregations are performed) and large positive values that tend to the single-task case, without any aggregation. In this case, the subsequent feature aggregation phase is beneficial when sufficient information is preserved by the task aggregation. The second column of the figure shows the behavior of the algorithm varying the hyperparameter $\epsilon_2$, which is stable for values close to zero, while it tends to aggregate all features when the value is large, becoming detrimental. 

\begin{figure}
\includegraphics[width=\textwidth]{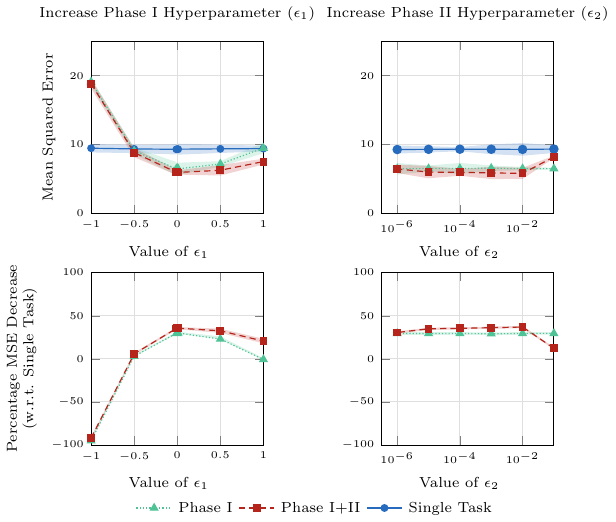}
\caption{Test Mean Squared Error (first row) and corresponding percentage decrease \wrt the single task model (second row), varying one hyperparameter at a time, only aggregating targets (Phase I) or adding the feature aggregation (Phase I+II). The first parameter $\epsilon_1$ refers to the propensity of the user to aggregate tasks. On the other hand, the second parameter $\epsilon_2$ refers to the propensity of the user to aggregate features.} \label{fig:synthMSE2}
\end{figure}

\subsection{Real World Experiments}\label{app:real}
In this subsection we provide additional details and results on the real-world experiments performed. All experiments have been conducted with an HPC-system, specifically a BullSequana XH2000 supercomputer using the 3rd generation of AMD EPYC CPUs (Milan), NVIDIA A100 GPUs, and a 130 Petabyte DDN filesystem. On this machine, we exploited $4$ nodes of CPU and $2$ GPUs for training neural networks.

In the first set of experiments, we consider School and SARCOS data, as described in the main paper, comparing the regression scores obtained with LR, SVR, and MLP after the application of the \algnameshortMain. We selected these datasets to test the algorithm on classical data, exploiting the implementation of some MTL baselines available for benchmarking, as described in Section \ref{sec:experiments}. The SARCOS dataset considers $D$ shared features, therefore the main algorithm has been applied on it. On the contrary, the School dataset presents the same features, measured differently for each school (task). Therefore, we applied the variant of the algorithm discussed in \ref{rem:variants}. Additionally, given the different amounts of samples available for each school, in the aggregated case, we randomly extracted a number of samples from the school with more samples equal to the number of samples of the other school, ordering both samples depending on the target and aggregating consequently. The idea is that in this way, we are aggregating a sample of the best student of one school with a sample of the best student of the other school, then the second best student of the first is aggregated with the second best of the other school, and so on. We considered the NRMSE as a performance score, being a valid elaboration of the MSE, the metric that we are optimizing, and exploiting the fact that it was already implemented for the benchmark MTL approaches considered.

In the second real-world framework, we consider the QM9 dataset, as discussed in Section \ref{sec:experiments}. This is a recent, challenging graph-based dataset, where thousands of molecule structures are provided as inputs, with their characteristics being the targets. In this case, we could therefore apply Algorithm \ref{alg:nonLinMTL}. We had to adapt the input to be a tabular set of features, therefore we averaged the features of nodes and edges, as done in other literature approaches. To provide a fair comparison, we exploited a state-of-the-art library, LibMTL, as described in the main paper, producing $14$ results with a hard parameter sharing graph neural network, trained considering $14$ different weighting strategies (all described in the paper of the library \cite{Lin2022LibMTLAP} and implemented from recent papers. Additionally, we added some results, in terms of MSE, from a recent paper that considers the QM9 dataset similarly, to show the best performance, to our knowledge, of a GNN architecture on these data. From the results, we conclude that our approach is able to outperform the single-task counterparts and to be competitive with graph neural network architectures with hard parameter sharing. However, it does not achieve the performance of the best GNN architecture to the best of our knowledge. This is reasonable since we are transforming the graph inputs to tabular data, losing the graph information. However, we consider this result an instructive example of a tradeoff between reaching the best possible performance with a completely black box approach, such as a complex graph neural network, and an approach like \algnameshortMain, where the entire pipeline is interpretable, aimed to produce a simple yet effective ML workflow. 

Finally, we conclude the experimental section, providing an example of the application of our method to Earth science data. In particular, we consider about $30000$ hydrological sub-basins of Europe, for which we have a satellite target variable and a set of meteorological features and climate indices at each location. In this case, we show the performance of our approach, \wrt the single-task counterparts (which corresponds to $\epsilon=1$, showing a clear improvement of the MSE when a balanced aggregation is performed and a deterioration of the performances when the algorithm aggregates too much or when it does not aggregate enough. We did not consider additional MTL baselines in this context since its main purpose is to conclude the experimental validation section providing a reliable example of application in line with the motivational example that has been mentioned throughout the paper. Additionally, the main applicative interest is to provide an improvement \wrt single task, preserving the interpretability of the pipeline and improving existing climate indices, as will be further described in a future applicative work.
\newpage
%
%
%

\end{document}